\newcommand{\ti}{\textit}
\newcommand{\tb}{\textbf}
\newcommand{\eg}{\ti{e.g.}}
\newcommand{\ie}{\ti{i.e.}}
\newcommand{\etal}{{\it et al.}}
\newtheorem{lemmasupp}{Lemma}[section]
\newtheorem{proposition}{Proposition}
\newtheorem{propositionsupp}{Proposition}[section]
\newcommand{\reals}{\mathbb{R}}
\newcommand{\E}{\mathbb{E}}
\newcommand{\Var}{\mathrm{Var}}
\DeclarePairedDelimiterX{\infdivx}[2]{(}{)}{
  #1\;\delimsize|\delimsize|\;#2
}
\newcommand{\kld}[2]{\ensuremath{D_\text{KL}\infdivx{#1}{#2}}} 
\DeclareMathOperator*{\argmax}{arg\,max}
\DeclareMathOperator*{\argmin}{arg\,min}
\DeclareMathOperator*{\median}{median}
\newcommand{\x}{\mathbf{x}}
\newcommand{\betap}{\beta}
\newcommand{\xsupi}{\mathbf{x}_i}
\newcommand{\ysupi}{y_i}
\newcommand{\trainset}{\{(\xsupi, \ysupi)\}_{i=1}^n}
\newcommand{\tildext}{\tilde{\x}^{(t)}}
\newcommand{\sampleset}{\{\tildext_i\}_{i = 1}^m}
\newcommand{\searchdist}{p_{\theta^{(t)}}(\x)}
\newcommand{\searchdistminus}{p_{\theta^{(t-1)}}(\x)}
\newcommand{\searchdistxsupi}{p_{\theta^{(t)}}(\xsupi)}
\newcommand{\searchdiszxsupi}{p_{\theta^{(0)}}(\xsupi)}
\newcommand{\searchdissub}{p_\theta(\x)}
\newcommand{\searchdistsub}{p_{\theta^{(t)}}(\x)}
\newcommand{\searchdistfinal}{p_{\theta^{(T)}}(\x)}
\newcommand{\searchdis}{\searchdissub}
\newcommand{\traindis}{p_0(\x)}
\newcommand{\fig}{Figure~}
\newcommand{\suppinfo}{Supplementary Material}
\newcommand{\sth}[1]{#1^\textit{th}}
\newcommand{\gtfunction}{f(\x)}
\newcommand{\betat}{{\betap^{(t)}}}
\newcommand{\Pbetat}{P_{\betat}}
\newcommand{\Pbetatminus}{P_{\betap^{(t - 1)}}}
\newcommand{\pbetaz}{p_{\betap^{(0)}}}
\newcommand{\pbeta}{p_{\betap}}
\newcommand{\mbd}{\text{MBD}}
\newcommand{\foralli}{\forall i \in \{1, \ldots, m\}}
\newcommand{\oracle}{p_\betap(y \mid \x)}
\newcommand*{\addFileDependency}[1]{
  \typeout{(#1)}
  \@addtofilelist{#1}
  \IfFileExists{#1}{}{\typeout{No file #1.}}
}
\title{Autofocused oracles for model-based design}
\author{
  Clara Fannjiang and Jennifer Listgarten \\
  Department of Electrical Engineering \& Computer Sciences\\
  University of California, Berkeley\\
  Berkeley, CA 94720 \\
  \texttt{\{clarafy,jennl\}@berkeley.edu}
}
\begin{document}

\maketitle

\begin{abstract}
Data-driven design is making headway into a number of application areas, including protein, small-molecule, and materials engineering. The design goal is to construct an object with desired properties, such as a protein that binds to a therapeutic target, or a superconducting material with a higher critical temperature than previously observed. To that end, costly experimental measurements are being replaced with calls to high-capacity regression models trained on labeled data, which can be leveraged in an {\it in silico} search for design candidates. However, the design goal necessitates moving into regions of the design space beyond where such models were trained. Therefore, one can ask: should the regression model be altered as the design algorithm explores the design space, in the absence of new data? Herein, we answer this question in the affirmative. In particular, we (i) formalize the data-driven design problem as a non-zero-sum game, (ii) develop a principled strategy for retraining the regression model as the design algorithm proceeds---what we refer to as \ti{autofocusing}, and (iii) demonstrate the promise of autofocusing empirically.
\end{abstract}

\section{Oracle-based design}

The design of objects with desired properties, such as novel proteins, molecules, or materials, has a rich history in bioengineering, chemistry, and materials science. In these domains, design has historically been performed through iterative, labor-intensive experimentation \cite{Arnold2018-uy} (\eg,  measuring protein binding affinity) or compute-intensive physics simulations \cite{Hafner2008-rb} (\eg, computing low-energy structures for nanomaterials). Increasingly, however, attempts are being made  to replace these costly and time-consuming steps with cheap and fast calls to a proxy regression model, trained on labeled data \cite{Yang2019-ac, Wu2019-ei, Bedbrook2019-bl, Mansouri_Tehrani2018-ux, Biswas2020-um}.
Herein, we refer to such a proxy model as an {\it oracle}, and  assume that acquisition of training data for the oracle is complete, as in \cite{Bedbrook2019-bl, Mansouri_Tehrani2018-ux, Biswas2020-um, Popova2018-ad, Gomez-Bombarelli2018}.\footnote{For many applications in protein, molecule, and material design, even if one performs iterative rounds of data acquisition, at some point, the acquisition phase concludes due to finite resources.} The key issue addressed by our work is how best to train an oracle for use in design, given fixed training data.

In contrast to the traditional use of predictive models, oracle-based design is distinguished by the fact that it seeks solutions---and therefore, will query the oracle---in regions of the design space that are not well-represented by the oracle training data. If this is not the case, the design problem is easy in that the solution is within the region of the training data. Furthermore, one does not know beforehand which parts of the design space a design procedure will navigate through. As such, a major challenge arises when an oracle is employed for design: its outputs, including its uncertainty estimates, become unreliable beyond the training data \cite{Nguyen2014-xi, Brookes2019-vw}. Successful oracle-based design thus involves an inherent trade-off between the need to stay ``near'' the training data in order to trust the oracle, and the need to depart from it in order to make improvements. While trust region approaches have been developed to help address this trade-off~\cite{Brookes2019-vw,Biswas2020-um}, herein, we take a different approach and ask: {\it what is the most effective way to use a fixed, labeled dataset to train an oracle for design}? 

\vspace{-0.2cm}
\paragraph{Contributions} We develop a novel approach to oracle-based design that specifies how to update the oracle as the design space is explored---what we call {\it autofocusing} the oracle. In particular, we (i) formalize oracle-based design as a non-zero-sum game, (ii) derive an oracle-updating strategy for seeking a Nash equilibrium, and (iii) demonstrate empirically that autofocusing holds promise for improving oracle-based design. 

\vspace{-0.2cm}

\section{Model-based optimization for design}
\label{sec:mbo} 

Design problems can be cast as seeking points in the design space, $\x \in \mathcal{X}$, that with high probability satisfy desired conditions on a property random variable, $y \in \reals$. For example, one might want to design a superconducting material by specifying its chemical composition, $\x$, such that the resulting material has critical temperature greater than some threshold, $y \geq y_\tau$, or has maximal critical temperature, $y=y_\text{max}$. We specify the desired properties using a constraint set, $S$, such as \mbox{$S = \{y \colon y \geq y_\tau\}$} for some $y_\tau$. The design goal is then to solve $\argmax_\x P(y \in S \mid \x)$. This optimization problem over the inputs, $\x$, can be converted to one over {\it distributions} over the design space \cite{Zlochin2004-eq, Brookes2020}.  Specifically, model-based optimization (MBO) seeks the parameters, $\theta$, of a ``search model'', $\searchdis$, that maximizes an objective that bounds the original objective:
\vspace{-0.2cm}
\begin{align}\label{eq:mbo}
\max_\x P(y \in S \mid \x) \geq
\max_{\theta \in \Theta} \E_{\searchdis}[P(y \in S \mid \x)] = \max_{\theta \in \Theta} \E_{\searchdis}\left[\int_{S}p(y \mid \x) dy\right].
\end{align}
The original optimization problem over $\x$, and the MBO problem over $\theta$, are equivalent when the search model has the capacity to place point masses on optima of the original objective. Reasons for using the MBO formulation include that it requires no gradients of $p(y \mid \x)$, thereby allowing the use of arbitrary oracles for design, including those that are not differentiable with respect to the design space and otherwise require specialized treatment~\cite{Linder2020}. MBO also naturally allows one to obtain not just a single design candidate, but a diverse set of candidates, by sampling from the final search distribution (whose entropy can be adjusted by adding regularization to Equation \ref{eq:mbo}). Finally, MBO introduces the language of probability into the optimization, thereby allowing coherent incorporation of  probabilistic constraints such as implicit trust regions~\cite{Brookes2019-vw}. The search model can be any parameterized probability distribution that can be sampled from, and whose parameters can be estimated using weighted maximum likelihood estimation (MLE) or approximations thereof. Examples include mixtures of Gaussians, hidden Markov models, variational autoencoders~\cite{KingmaW13}, and Potts models~\cite{Marks2011}. Notably, the search model distribution can be over discrete or continuous random variables, or a combination thereof.

We use the phrase \ti{model-based design} (\mbd) to denote use of MBO to solve a design problem. Hereafter, we focus on oracle-based \mbd, which attempts to solve Equation \ref{eq:mbo} by replacing costly and time-consuming queries
of the ground truth\footnote{We refer to the \ti{ground truth} as the distribution of direct property measurements, which are inevitably stochastic due to sensor noise.}, $p(y \mid \x)$, with calls to a trained regression model (\ie, oracle), $p_\betap(y \mid \x)$, with parameters, $\betap \in B$. Given access to a fixed dataset, $\{(\xsupi, y_i)\}_{i = 1}^n$, the oracle is typically trained once using standard techniques and thereafter considered fixed~\cite{Bedbrook2019-bl, Mansouri_Tehrani2018-ux, Biswas2020-um, Popova2018-ad, Gomez-Bombarelli2018, Brookes2019-vw, Gupta2019-yd, Killoran2017-ym}. In what follows, we describe why such a strategy is sub-optimal and how to re-train the oracle in order to better achieve design goals. First, however, we briefly review a common approach for performing MBO, as we will leverage such algorithms in our approach. 

\vspace{-0.2cm}
\subsection{Solving model-based optimization problems} 
\label{section:solving_mbo}

MBO problems are often tackled with an Estimation of Distribution Algorithm (EDA)~\cite{Bengoetxea2001, Baluja1995}, a class of iterative optimization algorithms that can be seen as Monte Carlo expectation-maximization~\cite{Brookes2020}; EDAs are also connected to the cross-entropy method~\cite{Rubinstein1997,Rubinstein1999} and reward-weighted regression in reinforcement learning~\cite{Peters2007-db}. Given an oracle, $p_\betap(y \mid \x)$, and an initial search model, $p_{\theta^{(t=0)}}$, an EDA typically proceeds at iteration $t$ with two core steps:

\begin{samepage}
\begin{enumerate}
    \item ``E-step'': 
    Sample from the current search model, $\tilde{\x_i} \sim \searchdistminus$ for all $i \in \{1,\ldots,m\}$.
     Compute a weight for each sample,  $v_i \coloneqq V(P_\betap(y \in S \mid \tilde{\x}_i))$, where $V(.)$ is a method-specific, monotonic transformation.
     \nopagebreak
    \item ``M-step'': Perform weighted MLE to yield an updated search model, $\searchdist$, which tends to have more mass where $P_\betap(y \in S \mid \x)$ is high. (Some EDAs can be seen as performing {\it maximum a posteriori} inference instead, which results in smoothed parameter updates~\cite{Brookes2019-vw}.)
\end{enumerate}
\end{samepage}

Upon convergence of the EDA, design candidates can be sampled from the final search model if it is not a point mass; one may also choose to use promising samples from earlier iterations. Notably, the oracle, $p_\betap(y \mid \x)$, remains fixed in the steps above. Next, we motivate a new formalism for oracle-based \mbd~that yields a principled approach for updating the oracle at each iteration.

\section{Autofocused oracles for model-based design}

The common approach of substituting the oracle, $p_\betap(y \mid \x)$, for the ground-truth, $p(y \mid \x)$, does not address the fact that the oracle is only likely to be reliable over the distribution from which its training data were drawn~\cite{Nguyen2014-xi, Sugiyama2007-sl, Quinonero-Candela2008-de}. To address this problem, we now reformulate the \mbd~problem as a non-zero-sum game, which suggests an algorithmic strategy for iteratively updating the oracle within any MBO algorithm.

\subsection{Model-based design as a game}

When the objective in \mbox{Equation \ref{eq:mbo}} is replaced with an oracle-based version,
\begin{align}\label{eq:mbo_oracle}
\argmax_{\theta \in \Theta} \E_{\searchdis}[P_\betap(y \in S \mid \x)],
\end{align}
the solution to the oracle-based problem will, in general, be sub-optimal with respect to the original objective that uses the ground truth, $P(y \in S \mid \x)$. This sub-optimality can be extreme due to pathological behavior of the oracle when the search model, $\searchdis$, strays too far from the training distribution during the optimization~\cite{Brookes2019-vw}.

Since one cannot access the ground truth, we seek a practical alternative wherein we can leverage an oracle, but also infer when the values of the ground-truth and oracle-based objectives (in Equations \ref{eq:mbo} and \ref{eq:mbo_oracle}, respectively) are likely to be close. To do so, we introduce the notion of the {\it oracle gap}, defined as $\E_{\searchdis}[\left|P(y \in S \mid \x) - P_\betap(y \in S \mid \x)\right|]$. When this quantity is small, then by Jensen's inequality the oracle-based and ground-truth objectives are close. Consequently, our insight for improving oracle-based design is to use the oracle that minimizes the oracle gap,
\begin{align}\label{eq:oracle_gap}
    \argmin_{\betap \in B} \textsc{OracleGap}(\theta, \beta) =
    \argmin_{\betap \in B}\E_{\searchdis}[\left|P(y \in S \mid \x) - P_\betap(y \in S \mid \x)\right|].
\end{align}
Together, Equations \ref{eq:mbo_oracle} and \ref{eq:oracle_gap} define the coupled objectives of two players, namely the search model (with parameters $\theta$) and the oracle (with parameters $\beta$), in a non-zero-sum game. To attain good objective values for both players, our goal will be to search for a Nash equilibrium---that is, a pair of values $(\theta^*, \beta^*)$ such that neither can improve its objective given the other. To do so, we develop an alternating ascent-descent algorithm, which alternates between (i) fixing the oracle parameters and updating the search model parameters to increase the objective in Equation \ref{eq:mbo_oracle} (the ascent step), and (ii) fixing the search model parameters and updating the oracle parameters to decrease the objective in Equation \ref{eq:oracle_gap} (the descent step). In the next section, we describe this algorithm in more detail. 

\paragraph{Practical interpretation of the \mbd~game.}
Interpreting the usefulness of this game formulation requires some subtlety. The claim is not that every Nash equilibrium yields a search model that provides a high value of the (unknowable) ground-truth objective in Equation \ref{eq:mbo}. However, for any pair of values, $(\theta, \beta)$, the value of the oracle gap provides a certificate on the value of the ground-truth objective. In particular, if one has an oracle and search model that yield an oracle gap of $\epsilon$, then by Jensen's inequality the ground-truth objective is within $\epsilon$ of the oracle-based objective. Therefore, to the extent that we are able to minimize the oracle gap (Equation \ref{eq:oracle_gap}), we can trust the value of our oracle-based objective (Equation \ref{eq:mbo_oracle}). Note that a small, or even zero oracle gap only implies that the oracle-based objective is trustworthy; successful design also entails achieving a \ti{high} oracle-based objective, the potential for which depends on an appropriate oracle class and suitably informative training data (as it always does for oracle-based design, regardless of whether our framework is used).

Although the oracle gap as a certificate is useful conceptually for motivating our approach, at present it is not clear how to estimate it. In our experiments, we found that we could demonstrate the benefits of autofocusing without directly estimating the oracle gap, relying solely on the principle of minimizing it.
We also note that in practice, what matters is not whether we converge to a Nash equilibrium, just as what matters in empirical risk minimization is not whether one exactly recovers the global optimum, only a useful point. That is, if we can find parameters, $(\theta, \beta)$, that yield better designs than alternative methods, then we have developed a useful method.

\subsection{An alternating ascent-descent algorithm for the \mbd~game}
\label{sect:alg}

Our approach alternates between an ascent step that updates the search model, and a descent step that updates the oracle. The ascent step is relatively straightforward as it leverages existing MBO algorithms. The descent step, however, requires some creativity. In particular, for the ascent step, we run a single iteration of an MBO algorithm as described in \cref{section:solving_mbo}, to obtain a search model that increases the objective in Equation \ref{eq:mbo_oracle}.
For the descent step, we aim to minimize the oracle gap in Equation \ref{eq:oracle_gap} by making use of the following observation (proof in \suppinfo~\cref{sect:proofs}).

\begin{samepage}
\begin{proposition}\label{prop:1} 
For any search model, $\searchdis$, if the oracle parameters, $\betap$, satisfy
\begin{align}\label{eq:oracle_kl_cond}
    \E_{\searchdis}[\kld{p(y \mid \x)}{p_\betap(y \mid \x)}] = \int_{\mathcal{X}} \kld{p(y \mid \x)}{p_\betap(y \mid \x)} \, \searchdis d\x \leq \epsilon,
\end{align}
\textit{where $\kld{p}{q}$ is the Kullback-Leibler (KL) divergence between distributions $p$ and $q$, then the following bound holds: }
\vspace{-0.1cm}
\begin{align}\label{eq:lb}
    \E_{\searchdissub}[|P(y \in S \mid \x) - P_\betap(y \in S \mid \x)|] \leq \sqrt{\frac{\epsilon}{2}}.
\end{align}
\end{proposition}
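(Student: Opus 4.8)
The plan is to bound the integrand of the oracle gap pointwise in $\x$, reduce it to a quantity controlled by the KL divergence, and then integrate against $\searchdis$. First I would rewrite the integrand: since $P(y \in S \mid \x) = \int_S p(y\mid\x)\,dy$ and likewise for the oracle, for each fixed $\x$ we have
\begin{align*}
|P(y \in S \mid \x) - P_\betap(y \in S \mid \x)| = \left|\int_S \big(p(y\mid\x) - p_\betap(y\mid\x)\big)\,dy\right|.
\end{align*}
The right-hand side is the absolute difference of the two conditional distributions evaluated on the single fixed event $S$, and is therefore bounded by their total variation distance $\mathrm{TV}(p(\cdot\mid\x), p_\betap(\cdot\mid\x)) = \tfrac{1}{2}\int|p(y\mid\x)-p_\betap(y\mid\x)|\,dy$. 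This is the standard fact that no event can be assigned probabilities differing by more than the TV distance, i.e. $\sup_A |P(A)-Q(A)| = \tfrac12\|p-q\|_1$.

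The key inequality I would then invoke is Pinsker's: for every $\x$,
\begin{align*}
\mathrm{TV}(p(\cdot\mid\x), p_\betap(\cdot\mid\x)) \leq \sqrt{\tfrac{1}{2}\,\kld{p(y\mid\x)}{p_\betap(y\mid\x)}}.
\end{align*}
Chaining this with the previous display gives a pointwise upper bound on the integrand by $\sqrt{\tfrac12 \kld{p(y\mid\x)}{p_\betap(y\mid\x)}}$, entirely in terms of the per-$\x$ KL divergence that appears in the hypothesis.

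It then remains to integrate over $\searchdis$. Applying the pointwise bound and then Jensen's inequality to the concave function $\sqrt{\cdot}$, I would obtain
\begin{align*}
\E_{\searchdissub}[|P(y\in S\mid\x) - P_\betap(y\in S\mid\x)|] \leq \E_{\searchdissub}\!\left[\sqrt{\tfrac{1}{2}\kld{p(y \mid \x)}{p_\betap(y \mid \x)}}\right] \leq \sqrt{\tfrac{1}{2}\,\E_{\searchdissub}[\kld{p(y \mid \x)}{p_\betap(y \mid \x)}]}.
\end{align*}
The hypothesis in Equation \ref{eq:oracle_kl_cond} bounds the expected KL divergence by $\epsilon$, which immediately yields the claimed bound $\sqrt{\epsilon/2}$.

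I do not expect a genuine obstacle here; the proof is short once the right tools are assembled. The two points requiring care are, first, the \emph{direction} of Jensen's inequality: because $\sqrt{\cdot}$ is concave we have $\E[\sqrt{Z}] \leq \sqrt{\E[Z]}$, which fortunately is exactly the direction that preserves an upper bound. Second, one must confirm that the constant in Pinsker's inequality is stated for the TV convention $\tfrac12\|\cdot\|_1$, so that the factor of $\tfrac12$ propagates correctly into the final $\sqrt{\epsilon/2}$ rather than producing a spurious extra constant.
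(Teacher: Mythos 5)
Your proof is correct and follows essentially the same route as the paper's: a pointwise bound of the event-probability difference by total variation, Pinsker's inequality, and Jensen's inequality. The only cosmetic difference is that the paper squares the gap first and takes the square root of the expectation at the end, whereas you take square roots pointwise and invoke concavity of $\sqrt{\cdot}$; these are the same argument reorganized.
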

\end{samepage}
\ifthenelse{\boolean{ARXIV}}{}{\vspace{-0.2cm}}

As a consequence of Proposition \ref{prop:1}, given any search model, $\searchdis$, an oracle that minimizes the expected KL divergence in Equation \ref{eq:oracle_kl_cond} also minimizes an upper bound on the oracle gap. Our descent strategy is therefore to minimize this expected divergence.
In particular, as shown in the \suppinfo~\cref{sect:proofs}, the resulting oracle parameter update at iteration $t$ can be written as
\mbox{$\betap^{(t)} = \argmax_{\betap \in B} \E_{\searchdistsub} \E_{p(y \mid \x)} [ \log p_\betap(y \mid \x)]$}, where we refer to the objective as the log-likelihood under the search model. Although we cannot generally access the ground truth, $p(y \mid \x)$, we do have labeled training data, $\trainset$, whose labels come from the ground-truth distribution, $y_i \sim p(y \mid {\x=\xsupi})$. We therefore use importance sampling with the training distribution, $p_0(\x)$, as the proposal distribution, to obtain a now-practical oracle parameter update,
\ifthenelse{\boolean{ARXIV}}{}{\vspace{-0.1cm}}
\begin{align}\label{eq:derive_iwerm2}
    \;\;\;\;\betap^{(t)} & = \argmax_{\betap \in B}\frac{1}{n} \sum_{i = 1}^n \frac{\searchdistxsupi}{p_0(\xsupi)} \log p_\betap(\ysupi \mid \xsupi).
\end{align}
The training points, $\xsupi$, are used to estimate some model for $p_0(\x)$, while $\searchdist$ is given by the search model. We discuss the variance of the importance weights, \mbox{$w_i \coloneqq p_\theta(\xsupi) / p_0(\xsupi)$}, shortly.

Together, the ascent and descent steps amount to appending a ``Step 3'' to each iteration of the generic two-step MBO algorithm outlined in \cref{section:solving_mbo}, in which the oracle is retrained on re-weighted training data according to Equation \ref{eq:derive_iwerm2}. We call this strategy \ti{autofocusing} the oracle, as it retrains the oracle in lockstep with the search model, to keep the oracle likelihood maximized on the most promising regions of the design space. 
Pseudo-code for autofocusing can be found in the \suppinfo~(Algorithms \ref{alg:1} and \ref{alg:cbas}).
As shown in the experiments, autofocusing tends to improve the outcomes of design procedures, and when it does not, no harm is incurred relative to the naive approach with a fixed oracle. Before discussing such experiments, we first make some remarks. 

\subsection{Remarks on autofocusing}
\label{sect:remarks}

\paragraph{Controlling variance of the importance weights.} It is well known that importance weights can have high, even infinite, variance \cite{mcbook}, which may prevent the importance-sampled estimate of the log-likelihood from being useful for retraining the oracle effectively. That is, solving Equation \ref{eq:derive_iwerm2} may not reliably yield oracle parameter estimates that minimize the log-likelihood under the search model. To monitor the reliability of the importance-sampled estimate, one can compute and track an \ti{effective sample size} of the re-weighted training data, \mbox{$n_e \coloneqq (\sum_{i=1}^n w_i)^2 /\sum_{i=1}^n w_i^2$}, which reflects the variance of the importance weights \cite{mcbook}. If one has some sense of a suitable sample size for the application at hand (\eg, based on the oracle model capacity), then one could monitor $n_e$ and choose not to retrain when it is too small. Another variance control strategy is to use a trust region to constrain the movement of the search model, such as in~\cite{Brookes2019-vw}, which automatically controls the variance (see \suppinfo~Proposition \ref{prop:cbas}). Indeed, our experiments show how autofocusing works synergistically with a trust-region approach. Finally, two other common strategies are: (i) self-normalizing the weights, which provides a biased but consistent and lower-variance estimate \cite{mcbook}, and (ii) flattening the weights \cite{Sugiyama2007-sl} to $w_i^\alpha$ according to a hyperparameter, $\alpha \in [0, 1]$. The value of $\alpha$ interpolates between the original importance weights ($\alpha = 1$), which provide an unbiased but high-variance estimate, and all weights equal to one ($\alpha = 0$), which is equivalent to naively training the oracle (\ie, no autofocusing).

\paragraph{Oracle bias-variance trade-off.}
If the oracle equals the ground truth over all parts of the design space encountered during the design procedure, then autofocusing should not improve upon using a fixed oracle. In practice, however, this is unlikely to ever be the case---the oracle is almost certain to be misspecified and ultimately mislead the design procedure with incorrect inductive bias. 
It is therefore interesting to consider what autofocusing does from the perspective of the bias-variance trade-off of the oracle, with respect to the search model distribution. 
On the one hand, autofocusing retrains the oracle using an unbiased estimate of the log-likelihood over the search model. On the other hand, as the search model moves further away from the training data, the effective sample size available to train the oracle decreases; correspondingly, the variance of the oracle increases. In other words, when we use a fixed oracle (no autofocusing), we prioritize minimal variance at the expense of greater bias. With pure autofocusing, we prioritize reduction in bias at the expense of higher variance. Autofocusing with techniques to control the variance of the importance weights~\cite{Sugiyama2007-sl, Sugiyama2012-ix} enables us to make a suitable trade-off between these two extremes.

\paragraph{Autofocusing corrects design-induced covariate shift.} 
In adopting an importance-sampled estimate of the training objective, Equation \ref{eq:derive_iwerm2} is analogous to the classic covariate shift adaptation strategy known as importance-weighted empirical risk minimization~\cite{Sugiyama2007-sl, Sugiyama2012-ix}. We can therefore interpret autofocusing as dynamically correcting for covariate shift induced by a design procedure, where, at each iteration, a new ``test'' distribution is given by the updated search model. Furthermore, we are in the fortunate position of knowing the exact parametric form of the test density at each iteration, which is simply that of the search model. This view highlights that the goal of autofocusing is not necessarily to increase exploration of the design space, but to provide a more useful oracle wherever the search model does move (as dictated by the underlying method to which autofocusing is added).
\vspace{-.1cm}

\section{Related Work}

Although there is no cohesive literature on oracle-based design in the fixed-data setting, its use is gaining prominence in several application areas, including the design of proteins and nucleotide sequences \cite{Biswas2020-um, Brookes2019-vw, Gupta2019-yd, Killoran2017-ym, Kumar2019-og}, molecules \cite{Olivecrona2017-bz, Popova2018-ad, Gomez-Bombarelli2018}, and materials \cite{Mansouri_Tehrani2018-ux, Hautier2010-yz}. Within such work, the danger in extrapolating beyond the training distribution is not always acknowledged or addressed. In fact, proposed design procedures often are validated under the assumption that the oracle is always correct~\cite{Popova2018-ad, Linder2020, Gupta2019-yd, Killoran2017-ym, Olivecrona2017-bz}. Some exceptions include Conditioning by Adaptive Sampling (CbAS)~\cite{Brookes2019-vw}, which employs a probabilistic trust-region approach using a model of the training distribution, and~\cite{Biswas2020-um}, which uses a hard distance-based threshold. Similar in spirit to \cite{Brookes2019-vw}, Linder et al. regularize the designed sequences based on their likelihood under a model of the training distribution \cite{Linder2020-wi}. In another approach, a variational autoencoder implicitly enforces a trust region by constraining design candidates to the probabilistic image of the decoder~\cite{Gomez-Bombarelli2018}. Finally, Kumar \& Levine tackle design by learning the inverse of a ground-truth function, which they constrain to agree with an oracle, so as to discourage too much departure from the training data \cite{Kumar2019-og}. None of these approaches update the oracle in any way. However, autofocusing is entirely complementary to and does not preclude the additional use of any of these approaches. For example, we demonstrate in our experiments that autofocusing improves the outcomes of CbAS, which implicitly inhibits the movement of the search model away from the training distribution.

Related to the design problem is that of active learning in order to optimize a function, using for example Bayesian optimization~\cite{Snoek2012-wg}. Such approaches are fundamentally distinct from our setting in that they dynamically acquire new labeled data, thereby more readily allowing for correction of oracle modeling errors. In a similar spirit, evolutionary algorithms sometimes use a ``surrogate'' model of the function of interest (equivalent to our oracle), to help guide the acquisition of new data~\cite{Jin2011-ot}. In such settings, the surrogate may be updated using an {\it ad hoc} subset of the data~\cite{Le2013-wt} or perturbation of the surrogate parameters~\cite{Schmidt2008-pw}. Similarly, a recent reinforcement-learning based approach to biological sequence design relies on new data to refine the oracle when moving into a region of design space where the oracle is unreliable \cite{Angermueller2019-iclr}.

Offline reinforcement learning (RL)~\cite{Levine2020-hd} shares similar characteristics with our problem in that the goal is to find a policy that optimizes a reward function, given only a fixed dataset of trajectories sampled using another policy. In particular, offline model-based RL leverages a learned model of dynamics that may not be accurate everywhere. Methods in that setting have attempted to account for the shift away from the training distribution using uncertainty estimation and  trust-region approaches~\cite{Chua2018-ko, Deisenroth2011-xt, Rhinehart2018-cm}; importance sampling has also been used for off-policy evaluation \cite{Precup2001-qm, Thomas2016-bb}.

As noted in the previous section, autofocusing operates through iterative retraining of the oracle in order to correct for covariate shift induced by the movement of the search model. It can therefore be connected to ideas from domain adaptation more broadly \cite{Quinonero-Candela2008-de}. Finally, we note that mathematically, oracle-based \mbd~is related to the decision-theoretic framework of performative prediction \cite{Perdomo2020-fa}. Perdomo \etal~formalize the phenomenon in which using predictive models to perform actions induces distributional shift, then present theoretical analysis of repeated retraining with new data as a solution. Our problem has two major distinctions from this setting: first, the ultimate goal in design is to maximize an unknowable ground-truth objective, not to minimize risk of the oracle. The latter is only relevant to the extent that it helps us achieve the former, and our work operationalizes that connection by formulating and minimizing the oracle gap. Second, we are in a fixed-data setting. Our work demonstrates the utility of adaptive retraining even in the absence of new data.

\section{Experiments}

We now demonstrate empirically, across a variety of both experimental settings and MBO algorithms, how autofocusing can help us better achieve design goals. First we leverage an intuitive example to gain detailed insights into how autofocus behaves. We then conduct a detailed study on a more realistic problem of designing superconducting materials. Code for our experiments is available at \url{https://github.com/clarafy/autofocused_oracles}.

\subsection{An illustrative example}
\label{sect:toy}

\begin{figure}
  \centering
  \includegraphics[width=\linewidth]{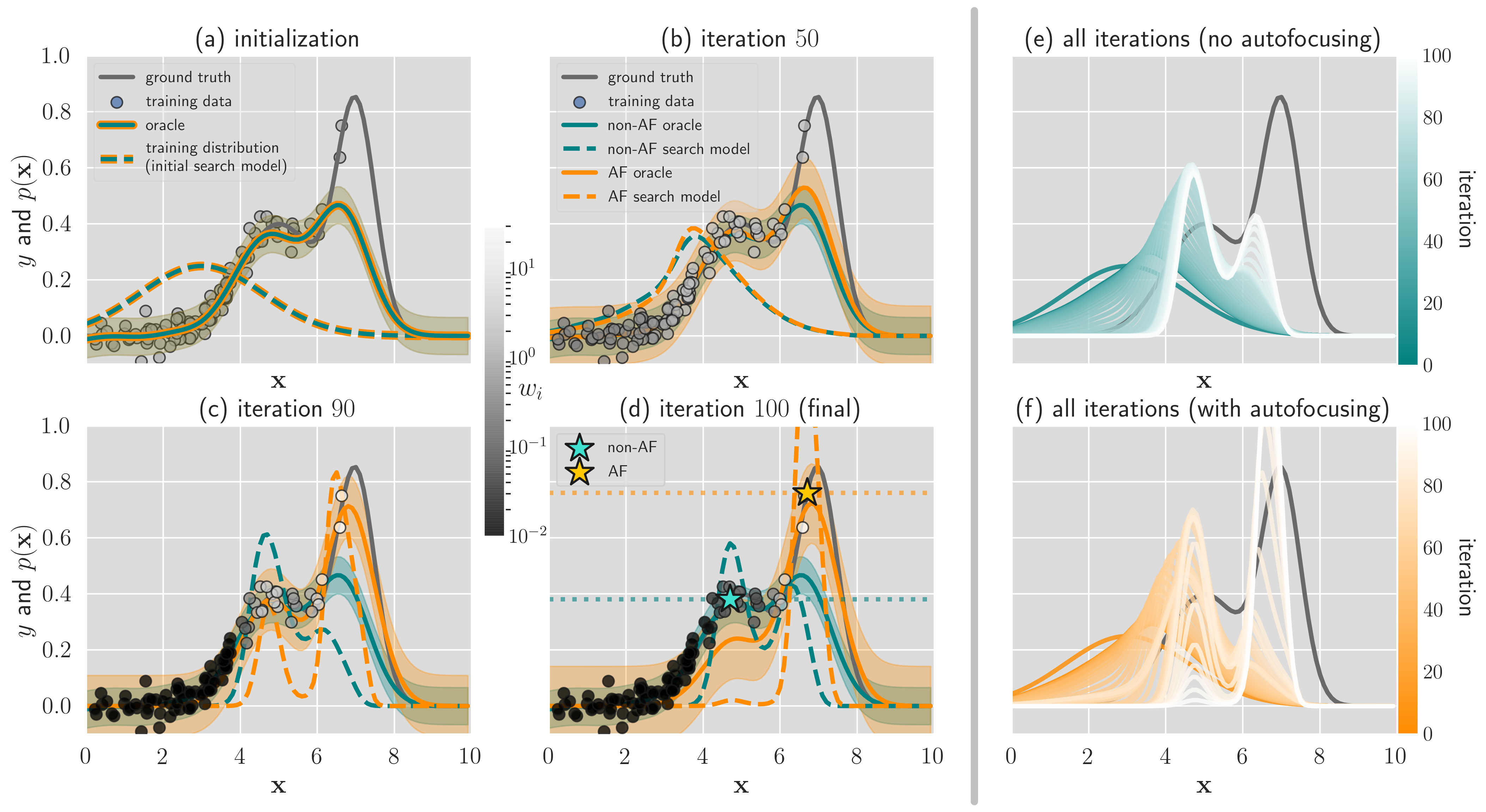}
  \caption{Illustrative example. Panels (a-d) show detailed snapshots of the MBO algorithm, CbAS~\cite{Brookes2019-vw}, with and without autofocusing (AF) in each panel. The vertical axis represents both $y$ values (for the oracle and ground truth) and probability density values (of the training distribution, $\traindis$, and search distributions, $\searchdist$). Shaded envelopes correspond to $\pm1$ standard deviation of the oracles, $\sigma_{\beta^{(t)}}$, with the oracle expectations, $\mu_{\beta^{(t)}}(\x)$, shown as a solid line. Specifically, (a) at initialization, the oracle and search model are the same for AF and non-AF. Intermediate and final iterations are shown in (b-d), where the non-AF and AF oracles and search models increasingly diverge. Greyscale of training points corresponds to their importance weights used for autofocusing.
  In (d), each star and dotted horizontal line indicate the ground-truth value corresponding to the point of maximum density, indicative of the quality of the final search model (higher is better). The values of $(\sigma_\epsilon, \sigma_0)$ used here correspond to the ones marked by an $\times$ in \fig \ref{fig:toy_improvement}, which summarizes results across a range of settings. Panels (e,f) show the search model over all iterations without and with autofocusing, respectively.
    }
  \label{fig:toy_demo}
  \ifthenelse{\boolean{ARXIV}}{}{\vspace{-0.4cm}}
\end{figure}

\begin{figure}[ht]
  \centering
  \includegraphics[width=\linewidth]{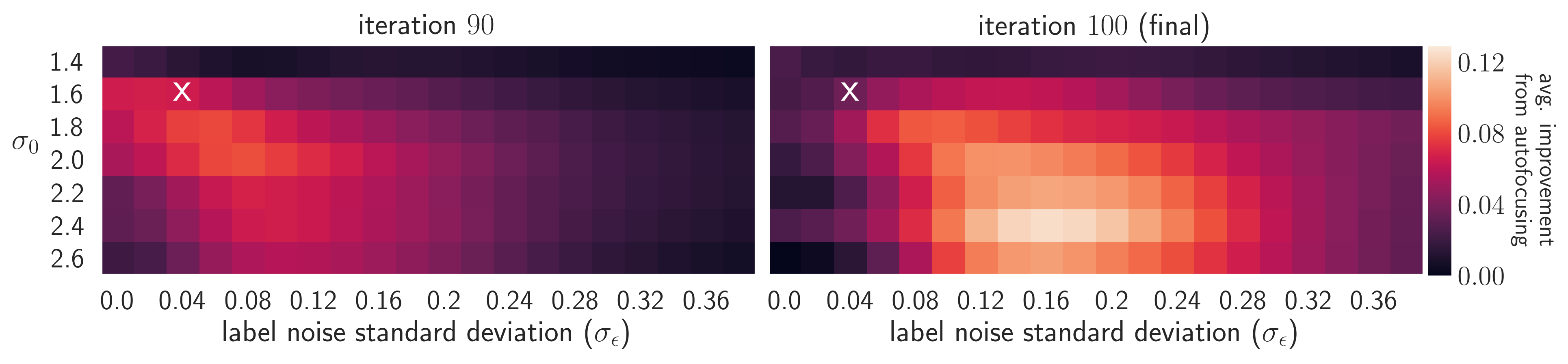}
  \caption{Improvement from autofocusing (AF) over a wide range of settings of the illustrative example. Each colored square shows the improvement (averaged over $50$ trials) conferred by AF for one setting, $(\sigma_\epsilon,\sigma_0)$, of, respectively, the standard deviations of the training distribution and the label noise. Improvement is quantified as the difference between the ground-truth objective in Equation \ref{eq:mbo} achieved by the final search model with and without AF. A positive value means AF yielded higher ground-truth values (\ie, performed better than without AF), while zero means it neither helped nor hurt. Similar plots to \fig \ref{fig:toy_demo} are shown in the \suppinfo~for other settings (\fig \ref{fig:toy_examples}).}
  \label{fig:toy_improvement}
  \ifthenelse{\boolean{ARXIV}}{}{\vspace{-0.3cm}}
\end{figure}

To investigate how autofocusing works in a setting that can be understood intuitively, we constructed a one-dimensional design problem where the goal was to maximize a multi-modal ground-truth function, $\gtfunction: \reals \rightarrow \reals^+$, given fixed training data (\fig \ref{fig:toy_demo}a).  The training distribution from which training points were drawn, $\traindis$, was a Gaussian with variance, $\sigma^2_0$, centered at $3$, a point where $\gtfunction$ is small relative to the global maximum at $7$. This captures the common scenario where the oracle training data do not extend out to global optima of the property of interest. As we increase the variance of the training distribution, $\sigma^2_0$, the training data become more and more likely to approach the global maximum of $\gtfunction$. The training labels are drawn from $p(y \mid \x) = \mathcal{N}(f(\x), \sigma_\epsilon^2)$, where $\sigma_\epsilon^2$ is the variance of the label noise. For this example, we used CbAS~\cite{Brookes2019-vw}, an MBO algorithm that employs a probabilistic trust region. We did not control the variance of the importance weights.

An MBO algorithm prescribes a sequence of search models as the optimization proceeds, where each successive search model is fit using weighted MLE to samples from its predecessor. However, in our one-dimensional example, one can instead use numerical quadrature to directly compute each successive search model~\cite{Brookes2019-vw}. Such an approach enables us to abstract out the particular parametric form of the search model, thereby more directly exposing the effects of autofocusing. In particular, we used numerical quadrature to compute the search model density at iteration $t$ as \mbox{$p^{(t)}(\x) \propto  \Pbetat(y \in S^{(t)} \mid \x) \traindis$}, where $S^{(t)}$ belongs to a sequence of relaxed constraint sets such that $S^{(t)} \supseteq S^{(t + 1)} \supseteq S$~\cite{Brookes2019-vw}. 
We computed this sequence of search models in two ways: (i) without autofocusing, that is, with a fixed oracle trained once on equally weighted training data, and (ii) with autofocusing, that is, where the oracle was retrained at each iteration.
In both cases, the oracle was of the form $\pbeta(y \mid \x) = \mathcal{N}(\mu_\beta(\x), \sigma_\beta^2)$, where $\mu_\betap(\x)$ was fit by kernel ridge regression with a radial basis function kernel and $\sigma_\betap^2$ was set to the mean squared error between $\mu_\betap(\x)$ and the labels (see \suppinfo~\cref{sect:toy_supp} for more details). Since this was a maximization problem, the desired condition was set as $S = \{y : y \geq \max_\x \mu_\betap(\x)\}$ (where $ \mu_\betap(\x) = 0.68$ for the initial oracle).
We found that autofocusing more effectively shifts the search model toward the ground-truth global maximum as the iterations proceed (\fig \ref{fig:toy_demo}b-f), thereby providing improved design candidates.

To understand the effect of autofocusing more systematically, we repeated the experiment just described across a range of settings of the variances of the training distribution, $\sigma^2_0$, and of the label noise, $\sigma^2_\epsilon$ (\fig \ref{fig:toy_improvement}).
Intuitively, both these variances control how informative the training data are about the ground-truth global maximum: as $\sigma^2_0$ increases, the training data are more likely to include points near the global maximum, and as $\sigma^2_\epsilon$ decreases, the training labels are less noisy. 
Therefore, if the training data are either too uninformative (small $\sigma^2_0$ and/or large $\sigma^2_\epsilon$) or too informative (large $\sigma^2_0$ and/or small $\sigma^2_\epsilon$), then one would not expect autofocusing to substantially improve design. In intermediate regimes, autofocusing should be particularly useful. Such a phenomenon is seen in our experiments (\fig \ref{fig:toy_improvement}).  Importantly, this kind of intermediate regime is one in which practitioners are likely to find themselves: the motivation for design is often sparked by the existence of a few examples with property values that are exceptional compared to most known examples, yet the design goal is to push the desired property to be more exceptional still. In contrast, if the true global optimum already resides in the training data, one cannot hope to design anything better anyway. However, even in regimes where autofocusing does not help, on average it does not hurt relative to a naive approach with a fixed oracle (\fig \ref{fig:toy_improvement} and \suppinfo~\cref{sect:toy}).

\ifthenelse{\boolean{ARXIV}}{}{\vspace{-0.2cm}}
\subsection{Designing superconductors with maximal critical temperature}

Designing superconducting materials with high critical temperatures is an active research problem that impacts engineering applications from magnetic resonance imaging systems to the Large Hadron Collider. To assess autofocusing in a more realistic scenario, we used a dataset comprising $21,263$ superconducting materials paired with their critical temperatures~\cite{Hamidieh2018-yu}, the maximum temperature at which a material exhibits superconductivity. Each material is represented by a feature vector of length eighty-one, which contains real-valued properties of the material's constituent elements (\eg, their atomic radius and valence). We outline our experiments here, with details deferred to the \suppinfo~\cref{sect:supercon}.

\begin{table}
  \caption{Designing superconducting materials. We ran six different MBO methods, each with and without autofocusing. For each method, we extracted those samples with oracle expectations above the $\sth{80}$ percentile and computed their ground-truth expectations. We report the median and maximum of those ground-truth expectations (both in degrees K), their percent chance of improvement (PCI, in percent) over the maximum label in the training data, as well as the Spearman correlation ($\rho$) and root mean squared error (RMSE, in degrees K) between the oracle and ground-truth expectations. Each reported score is averaged over $10$ trials, where, in each trial, a different training set was sampled from the training distribution. ``Mean Diff.'' is the average difference between the score when using autofocusing compared to not. Bold values with one star (*) and two stars (**), respectively, mean $p$-values $< 0.05$ and $< 0.01$ from a two-sided Wilcoxon signed-rank test on the $10$ paired score differences between a method with autofocus and without ('Original'). For all scores but RMSE, a higher value means autofocusing yielded better results (as indicated by the arrow $\uparrow$); for RMSE, the opposite is true (as indicated by the arrow $\downarrow$). 
  }
  \label{tab:stats}
  \tiny{
  \begin{tabular*}{\textwidth}{l @{\extracolsep{\fill}} lllll lllll}
    \toprule
    & Median $\uparrow$ & Max $\uparrow$ & PCI $\uparrow$ & $\rho \uparrow$ & RMSE $\downarrow$ & Median $\uparrow$ & Max $\uparrow$ & PCI $\uparrow$ & $\rho \uparrow$ & RMSE $\downarrow$ \\
    & \multicolumn{5}{c}{\textbf{CbAS}} & \multicolumn{5}{c}{\textbf{DbAS}} \\
    \cmidrule(r){2-6} \cmidrule(r){7-11} \\
    
    Original & 51.5 & 103.8 & 0.11 & 0.05 & 17.2 & 57.0 & 98.4 & 0.11 & 0.01 & 29.6 \\
    Autofocused & 76.4 & 119.8 & 3.78 &  0.56 & 12.9 & 78.9 & 111.6 & 4.4 & 0.01 & 24.5  \\
    Mean Diff. & \tb{24.9}** & \tb{16.0}** & \tb{3.67}** & \tb{0.51}** & \tb{-4.4}** & \tb{21.9}** & \tb{13.2}** & \tb{4.2}** & 0.01 & \tb{-5.1}* \\

    &  \multicolumn{5}{c}{\rule{0pt}{1.5em} \textbf{RWR}} & \multicolumn{5}{c}{\rule{0pt}{1.5em} \textbf{FB}} \\
    \cmidrule(r){2-6} \cmidrule(r){7-11} \\
    Original & 43.4 & 102.0 & 0.05 & 0.92 & 7.4 & 49.2 & 100.6 & 0.14 & 0.09 & 17.5  \\
    Autofocused & 71.4 & 114.0 & 1.60 & 0.65 & 12.7 & 64.2 & 111.6 & 0.86 & 0.49 & 11.1  \\
    Mean Diff. & \tb{28.0}** & \tb{12.0}** & \tb{1.56}** & \tb{-0.27}** & \tb{5.4}** & \tb{15.0}** & \tb{11.0}** & \tb{0.73}** & \tb{0.40}** & \tb{-6.4}**  \\
    
    & \multicolumn{5}{c}{\rule{0pt}{1.5em}\textbf{CEM-PI}} & \multicolumn{5}{c}{\rule{0pt}{1.5em}\textbf{CMA-ES}} \\
    \cmidrule(r){2-6} \cmidrule(r){7-11} \\
    Original & 34.5 & 55.8 & 0.00 & -0.16 & 148.3 & 42.1 & 69.4 & 0.00 & 0.27 & 27493.2 \\
    Autofocused & 67.0 & 98.0 & 1.69 & 0.13 & 29.4 & 50.2 & 85.8 & 0.01 & 0.52 & 9499.8 \\
    Mean Diff. & \tb{32.6}** & \tb{42.3}* & \tb{1.69}* & 0.29 & \tb{-118.9}** & \tb{8.1}* & \tb{16.3}* & 0.01 & \tb{0.25}* & \tb{-17993.5}* \\
    
    \bottomrule
  \end{tabular*}
  }
\ifthenelse{\boolean{ARXIV}}{}{\vspace{-0.4cm}}
\end{table}

Unlike {\it in silico} validation of a predictive model, one cannot hold out data to validate a design algorithm because one will not have ground-truth labels for proposed design candidates. Thus, similarly to~\cite{Brookes2019-vw}, we created a ``ground-truth'' model by training gradient-boosted regression trees~\cite{Hamidieh2018-yu,Stanev2018-vo} on the whole dataset and treating the output as the ground-truth expectation, $\mathbb{E}[y \mid \x]$, which can be called at any time.
Next, we generated training data to emulate the common scenario in which design practitioners have labeled data that are not dense near ground-truth global optima. In particular, we selected the $n = 17,015$ training points from the dataset whose ground-truth expectations were in the bottom $\sth{80}$ percentile. We used MLE with these points to fit a full-rank multivariate normal, which served as the training distribution, $\traindis$, from which we drew $n$ simulated training points, $\{\xsupi\}_{i = 1}^n$. For each $\xsupi$ we drew one sample, $\ysupi \sim \mathcal{N}(\E[y \mid \xsupi], 1)$, to obtain a noisy ground-truth label.
Finally, for our oracle, we used $\{(\xsupi, \ysupi)\}_{i = 1}^n$ to train an ensemble of three neural networks that output both $\mu_\betap(\x)$ and $\sigma_\betap^2(\x)$, to provide predictions of the form $p_\betap(y \mid \x) = \mathcal{N}(\mu_\betap(\x), \sigma_\betap^2(\x))$~\cite{Lakshminarayanan2016-um}.

We ran six different MBO algorithms, each with and without autofocusing, with the goal of designing materials with maximal critical temperatures. In all cases, we used a full-rank multivariate normal for the search model, and flattened the importance weights used for autofocusing to $w_i^\alpha$ \cite{Sugiyama2007-sl} with $\alpha = 0.2$ to help control variance. The MBO algorithms were: (i) Conditioning by Adaptive Sampling (CbAS) \cite{Brookes2019-vw}; (ii) Design by Adaptive Sampling (DbAS) \cite{Brookes2019-vw}; (iii) reward-weighted regression (RWR) \cite{Peters2007-db}; (iv) the ``feedback" mechanism proposed in \cite{Gupta2019-yd} (FB); (v) the cross-entropy method used to optimize probability of improvement (CEM-PI) \cite{Brookes2019-vw, Snoek2012-wg}; and (vi) Covariance Matrix Adaptation Evolution Strategy (CMA-ES) \cite{Hansen2006-sa}. These are briefly described in the \suppinfo~\cref{sect:supercon}.

To quantify the success of each algorithm, we did the following. At each iteration, $t$, we first computed the oracle expectations, \mbox{$\E_{\betap^{(t)}}[y \mid \x]$}, for each of $n$ samples drawn from the search model, $\searchdist$. We then selected the iteration where the $\sth{80}$ percentile of these oracle expectations was greatest. For that iteration, we computed various summary statistics on the \ti{ground-truth} expectations of the best samples, as judged by the oracle from that iteration (\ie, samples with oracle expectations greater than the $\sth{80}$ percentile; Table \ref{tab:stats}). See Algorithm \ref{alg:eval} in the \suppinfo~for pseudocode of this procedure. Our evaluation procedure emulates the typical setting in which a practitioner has limited experimental resources, and can only evaluate the ground truth for the most promising candidates \cite{Wu2019-ei, Bedbrook2019-bl, Mansouri_Tehrani2018-ux, Biswas2020-um}.

Across the majority of evaluation metrics, for all MBO methods, autofocusing a method provided a statistically significant improvement upon the original method. The percent chances of improvement (PCI, the percent chances that the best samples had greater ground-truth expectations than the maximum label in the training data), expose the challenging nature of the design problem. All methods with no autofocusing had a PCI less than $0.14\%$, which although small, still reflects a marked improvement over a naive baseline of simply drawing $n$ new samples from the training distribution itself, which achieves $5.9 \times 10^{-3}\%$. Plots of design trajectories from these experiments, and results from experiments without variance control and with oracle architectures of higher and lower capacities, can be found in the \suppinfo~(Figures \ref{fig:supercon_traj} and \ref{fig:supercon_traj2}, Table \ref{tab:stats_alpha1}).

\ifthenelse{\boolean{ARXIV}}{}{\vspace{-0.25cm}}
\section{Discussion}
\ifthenelse{\boolean{ARXIV}}{}{\vspace{-0.15cm}}

We have introduced a new formulation of oracle-based design as a non-zero-sum game. From this formulation, we developed a new approach for design wherein the oracle---the predictive model that replaces costly and time-consuming laboratory experiments---is iteratively retrained so as to ``autofocus'' it on the current region of design candidates under consideration. Our autofocusing approach can be applied to any design procedure that uses model-based optimization. We recommend using autofocusing with an MBO method that uses trust regions, such as CbAS~\cite{Brookes2019-vw}, which automatically helps control the variance of the importance weights used for autofocusing. For autofocusing an MBO algorithm without a trust region, practical use of the oracle gap certificate and/or effective sample size should be further investigated. Nevertheless, even without these, we have demonstrated empirically that autofocusing can provide benefits.

Autofocusing can be seen as dynamically correcting for covariate shift as the design procedure explores design space. It can also be understood as enabling a design procedure to navigate a trade-off between the bias and variance of the oracle, with respect to the search model distribution. One extension of this idea is to also perform oracle model selection at each iteration, such that the model capacity is tailored to the level of importance weight variance.

Further extensions to consider are alternate strategies for estimating the importance weights \cite{Sugiyama2012-ix}. In particular, training discriminative classifiers to estimate these weights may be fruitful when using search models that are implicit generative models, or whose likelihood cannot otherwise be computed in closed form, such as variational autoencoders~\cite{Sugiyama2012-ix, Grover2019-ne}. We believe this may be a promising approach for applying autofocusing to biological sequence design and other discrete design problems, which often leverage such models. One can also imagine extensions of autofocusing to gradient-based design procedures~\cite{Killoran2017-ym}---for example, using techniques for test-time oracle retraining, in order to evaluate the current point most accurately \cite{Sun2019-ml}.

\ifthenelse{\boolean{ARXIV}}{}{\newpage}

\begin{ack}
Many thanks to Sebasti\'an Prillo, David Brookes, Chloe Hsu, Hunter Nisonoff, Akosua Busia, and Sergey Levine for helpful comments on the work. We are also grateful to the U.S. National Science Foundation Graduate Research Fellowship Program and the Chan Zuckerberg Investigator Program for funding.
\end{ack}

\ifthenelse{\boolean{ARXIV}}{}
{
\section{Broader Impact}

If adopted more broadly, our work could affect how novel proteins, small molecules, materials, and other entities are engineered. Because predictive models are imperfect, even with the advances presented herein, care should be taken by practitioners to verify that any proposed design candidates are indeed safe and ethical for the intended downstream applications. The machine learning approach we present facilitates obtaining promising design candidates in a cost-effective manner, but practitioners must follow up on candidates proposed by our approach with conventional laboratory methods, as appropriate to the application domain.
}

\bibliographystyle{ieeetr}
\bibliography{neurips_2020}

\newpage
\renewcommand{\thesection}{S\arabic{section}}  
\renewcommand{\thetable}{S\arabic{table}}  
\renewcommand{\thefigure}{S\arabic{figure}}
\setcounter{section}{0}
\setcounter{figure}{0}

\hrule height 4pt
\vskip 0.25in
\vskip -\parskip
{\LARGE{\centering\textbf{Supplementary Material}\par}}
\vskip 0.29in
\vskip -\parskip
\hrule height 1pt
\vskip 0.09in
  
\section{Pseudocode}
\label{sect:code} 

Algorithm \ref{alg:1} gives pseudocode for autofocusing a broad class of model-based optimization (MBO) algorithms known as estimation of distribution algorithms (EDAs), which can be seen as performing Monte-Carlo expectation-maximization \cite{Brookes2020}. EDAs proceed at each iteration with a sampling-based ``E-step'' (Steps \ref{estep1} and \ref{estep2} in Algorithm \ref{alg:1}) and a weighted maximum likelihood estimation (MLE) ``M-step'' (Step \ref{mstep}; see \cite{Brookes2020} for more details). Different EDAs are distinguished by method-specific monotonic transformations $V(\cdot)$, which determine the sample weights used in the M-step. In some EDAs, this transformation is not explicitly defined, but rather implicitly implemented by constructing and using a sequence of relaxed constraint sets, $S^{(t)}$, such that \mbox{$S^{(t)} \supseteq S^{(t + 1)} \supseteq S$}~\cite{Rubinstein1997,Rubinstein1999,Brookes2019-vw}.

Algorithm \ref{alg:cbas} gives pseudocode for autofocusing a particular EDA, Conditioning by Adaptive Sampling (CbAS) \cite{Brookes2019-vw}, which uses such a sequence of relaxed constraint sets, as well as M-step weights that induce an implicit trust region for the search model update. For simplicity, the algorithm is instantiated with the design goal of maximizing the property of interest. It can easily be generalized to the goal of achieving a specific value for the property, or handling multiple properties (see Sections S2-3 of \cite{Brookes2019-vw}).

Use of [.] in the pseudocode denotes an optional input argument with default values.

\begin{algorithm}[ht]
\DontPrintSemicolon
\SetKwInOut{Input}{Input}
\SetKwInOut{Output}{Output}
\SetKwInOut{Initialization}{Initialization}{}{}
\Input{
Training data, $\trainset$; oracle model class, $\oracle$ with parameters, $\beta$, that can be estimated with MLE; search model class, $\searchdis$, with parameters, $\theta$, that can be estimated with weighted MLE or approximations thereof; desired constraint set, 
$S$ (\eg, $S = \{y \mid y \geq y_\tau.\}$); maximum number of iterations, $T$; number of samples to generate, $m$; EDA-specific monotonic transformation, $V(\cdot)$.}
\Initialization{ Obtain $\traindis$ by fitting to $\{\xsupi\}_{i = 1}^n$ with the search model class. For the search model, set $p_{\theta^{(0)}}(\x) \leftarrow \traindis$. For the oracle, $\pbetaz(y \mid \x)$, use MLE with equally weighted training data. }
\Begin{
\For{$t = 1, \ldots, T$}{
\begin{enumerate}
\item \label{estep1} Sample from the current search model, $\tildext_i \sim \searchdistminus, \foralli$.\;
\item \label{estep2} $v_i \leftarrow V(\Pbetatminus(y \in S \mid \tildext_i)), \foralli $.\;
\item \label{mstep} Fit the updated search model, $\searchdistsub$, using weighted MLE with the samples, $\sampleset$, and their corresponding EDA weights, $\{v_i\}_{i = 1}^m$.\;
\item Compute importance weights for the training data, \mbox{$w_i \leftarrow \searchdistxsupi / \searchdiszxsupi, i = 1, \ldots, n$}.\;
\item Retrain the oracle using the re-weighted training data,
\begin{align*}
    \betap^{(t)} & \leftarrow \argmax_{\betap \in B}\frac{1}{n} \sum_{i = 1}^n w_i \log p_\betap(\ysupi \mid \xsupi).
\end{align*}
\end{enumerate}
}
}
\Output{Sequence of search models, $\{\searchdist\}_{t = 1}^T$, and sequence of samples, $\{(\tildext_i, \ldots, \tildext_m)\}_{t = 1}^T$, from all iterations. One may use these in a number of different ways. For example, one may sample design candidates from the final search model, $\searchdistfinal$, or use the most promising candidates among $\{(\tildext_i, \ldots, \tildext_m)\}_{t = 1}^T$, as judged by the appropriate oracle (\ie, corresponding to the iteration at which a candidate was generated).}
\caption{\tb{Autofocused model-based optimization algorithm}}
\label{alg:1}
\end{algorithm}

\begin{algorithm}[ht]
\DontPrintSemicolon
\SetKwInOut{Input}{Input}
\SetKwInOut{Output}{Output}
\SetKwInOut{Initialization}{Initialization}{}{}
\Input{
Training data, $\trainset$; oracle model class, $\oracle$ with parameters, $\beta$, that can be estimated with MLE; search model class, $\searchdis$, with parameters, $\theta$, that can be estimated with weighted MLE or approximations thereof; maximum number of iterations, $T$; number of samples to generate, $m$; [percentile threshold, $Q = 90$].}
\Initialization{ Obtain $\traindis$ by fitting to $\{\xsupi\}_{i = 1}^n$ with the search model class.
 For the search model, set $p_{\theta^{(0)}}(\x) \leftarrow \traindis$. For the oracle, $\pbetaz(y \mid \x)$, use MLE with equally weighted training data. Set $\gamma_0 = -\infty$.}
\Begin{
\For{$t = 1, \ldots, T$}{
\begin{enumerate}
\item Sample from the current search model, $\tildext_i \sim \searchdistminus, \foralli$.\;
\item $q_t \leftarrow \text{$Q$-th percentile of the oracle expectations of the samples, } \{\mu_\beta(\tildext_i)\}_{i = 1}^m$\;
\item $\gamma_t \leftarrow \max\{\gamma_{t - 1}, q_t\}$\;
\item $v_i \leftarrow (p_0(\tildext_i) / p_{\theta^{(t - 1)}}(\tildext_i)) \Pbetatminus(y \geq \gamma_t \mid \tildext_i), \foralli $\;
\item Fit the updated search model, $\searchdistsub$, using weighted MLE with the samples, $\sampleset$, and their corresponding EDA weights, $\{v_i\}_{i = 1}^m$.\;
\item \label{step:iw} Compute importance weights for the training data, \mbox{$w_i \leftarrow \searchdistxsupi / \searchdiszxsupi, i = 1, \ldots, n$}.\;
\item \label{step:retrain} Retrain the oracle using the re-weighted training data,
\begin{align*}
    \betap^{(t)} & \leftarrow \argmax_{\betap \in B}\frac{1}{n} \sum_{i = 1}^n w_i \log p_\betap(\ysupi \mid \xsupi).
\end{align*}
\end{enumerate}
}
}
\Output{Sequence of search models, $\{\searchdist\}_{t = 1}^T$, and sequence of samples, $\{(\tildext_i, \ldots, \tildext_m)\}_{t = 1}^T$, from all iterations. One may use these in a number of different ways (see Algorithm \ref{alg:1}).}
\caption{\tb{Autofocused Conditioning by Adaptive Sampling (CbAS)}}
\label{alg:cbas}
\end{algorithm}

\begin{algorithm}[ht]
\DontPrintSemicolon
\SetKwInOut{Input}{Input}
\SetKwInOut{Output}{Output}
\SetKwInOut{Initialization}{Initialization}{}{}
\Input{
Sequence of samples, $\{(\tildext_i, \ldots, \tildext_m)\}_{t = 1}^T$, from each iteration of an MBO algorithm; their oracle expectations, $\{(\mu_{\beta^{(t)}}(\tildext_i), \ldots, \mu_{\beta^{(t)}}(\tildext_m))\}_{t = 1}^T$; [percentile threshold, $Q = 80$].}
\Begin{
\For{$t = 1, \ldots, T$}{
Compute and store $q_t \leftarrow \text{$Q$-th percentile of the oracle expectations, } \{\mu_{\beta^{(t)}}(\tildext_i)\}_{i = 1}^m$.\;
}
$t_\text{best} \leftarrow \argmax_t q_t$ (pick the best iteration) \;
$\mathcal{I} \leftarrow \{i \in \{1, \ldots, m \}: \mu_{\beta^{(t_\text{best})}}(\tilde{\x}^{(t_\text{best})}_i) \geq q_{t_\text{best}}\}$ (pick best samples from best iteration) \;
$\mu_\text{GT,best} \leftarrow \{\E[y \mid \tilde{\x}^{t_\text{best}}_i]: i \in \mathcal{I} \}$ \;
$\mu_\text{GT} \leftarrow (\E[y \mid \tilde{\x}_1^{t_\text{best}}], \ldots, \E[y \mid \tilde{\x}_m^{t_\text{best}}])$\;
$\mu_\text{oracle} \leftarrow (\mu_{\beta^{(t_\text{best})}}(\tilde{\x}^{t_\text{best}}_1), \ldots, \mu_{\beta^{(t_\text{best})}}(\tilde{\x}^{t_\text{best}}_m))$\;
$PCI \leftarrow 100 \cdot \frac{1}{|\mathcal{I}|} \sum_{i \in \mathcal{I}} \mathbf{1}[\E[y \mid \tilde{\x}^{t_\text{best}}_i] > \text{maximum label in training data})]$
$\rho \leftarrow \textsc{Spearman}(\mu_\text{GT}, \mu_\text{oracle})$\;
$RMSE \leftarrow \textsc{RMSE}(\mu_\text{GT}, \mu_\text{oracle})$\;
}
\Output{$\median(\mu_\text{GT,best}), \max(\mu_\text{GT,best}), PCI, \rho, RMSE$}
\caption{\tb{Procedure for evaluating MBO algorithms in superconductivity experiments}. For each MBO algorithm in Tables \ref{tab:stats}, \ref{tab:stats_alpha1}, \ref{tab:stats_big}, and \ref{tab:stats_small}, the reported scores were the outputs of this procedure, averaged over $10$ trials. Recall that $\mu_{\beta^{(t)}}(\x) \coloneqq \E_{\beta^{(t)}}[y \mid \x]$ denotes the expectation of the oracle model at iteration $t$, while $\E[y \mid \x]$ denotes the ground-truth expectation.}
\label{alg:eval}
\end{algorithm}

\clearpage

\section{Proofs, derivations, and supplementary results}
\label{sect:proofs}
\begin{proof}[Proof of Proposition \ref{prop:1}.]
For any distribution $\searchdis$, if
\begin{align}
    \E_{\searchdis} \left[ \kld{p(y \mid \x)}{p_\phi(y \mid \x)} \right] \leq \epsilon,
\end{align}
then it holds that
\begin{align}
    \E_{\searchdis} \left[ |P(y \in S \mid \x) - P_\phi(y \in S \mid \x) |^2 \right] & \leq \E_{\searchdis} \left[\delta(p(y \mid \x), p_\phi(y \mid \x))^2 \right] \\
    & \leq \frac{1}{2}\E_{\searchdis} \left[\kld{p(y \mid \x)}{p_\phi(y \mid \x)} \right] \\
    & \leq \frac{\epsilon}{2}.
\end{align}
where $\delta(p, q)$ is the total variation distance between probability distributions $p$ and $q$, and the second inequality is due to Pinsker's inequality. Finally, applying Jensen's inequality yields
\begin{align}
    \E_{\searchdis} \left[ \left| P(y \in S \mid \x) - P_\phi(y \in S \mid \x)  \right| \right]& \leq \sqrt{\frac{\epsilon}{2}}.
\end{align}
\end{proof}

\subsection{Derivation of the descent step to minimize the oracle gap}

Here, we derive the descent step of the alternating ascent-descent algorithm described in \cref{sect:alg}. At iteration $t$, given the search model parameters, $\theta^{(t)}$, our goal is to update the oracle parameters according to
\begin{align}\label{eq:phi_theta}
\betap^{(t)} & = \argmin_{\betap \in B} \E_{\searchdistsub}[\kld{p(y \mid \x)}{p_\betap(y \mid \x)}].
\end{align}
Note that
\begin{align}\label{eq:derive_iwerm1}
\betap^{(t)} & = \argmin_{\betap \in B} \E_{\searchdistsub}\left[\int_\reals p(y \mid \x) \log{p(y \mid \x)} dy - \int_\reals p(y \mid \x) \log{p_\betap(y \mid \x)} dy \right] \\
& = \argmax_{\betap \in B} \E_{\searchdistsub} \left[\int_\reals p(y \mid \x)\log p_\betap(y \mid \x) dy \right] \\
& = \argmax_{\betap \in B} \E_{\searchdistsub} \E_{p(y \mid \x)} [ \log p_\betap(y \mid \x) ].
\end{align}
We cannot query the ground truth, $p(y \mid \x)$, but we do have labeled training data, $\trainset$, where $\xsupi \sim \traindis$ and $\ysupi \sim p(y \mid {\x=\xsupi})$ by definition. We therefore leverage importance sampling, using $\traindis$ as the proposal distribution, to obtain
\begin{align}\label{eq:is_oracle_obj}
\betap^{(t)} = \argmax_{\betap \in B} \E_{\traindis} \E_{p(y \mid \x)} \left[ \frac{\searchdistsub}{\traindis}\log p_\betap(y \mid \x) \right].
\end{align}
Finally, we instantiate an importance sampling estimate of the objective in Equation \ref{eq:is_oracle_obj} with our labeled training data, to get a practical oracle parameter update,
\begin{align}\label{eq:oracle_update}
\betap^{(t)} & = \argmax_{\betap \in B}\frac{1}{n} \sum_{i = 1}^n \frac{\searchdistxsupi}{p_0(\xsupi)} \log p_\betap(\ysupi \mid \xsupi).
\end{align}
This update is equivalent to fitting the oracle parameters, $\beta^{(t)}$, by performing weighted MLE with the labeled training data, $\trainset$, and corresponding weights, $\{w_i\}_{i = 1}^n$, where $w_i  \coloneqq \searchdistxsupi/p_0(\xsupi)$.

\subsection{Variance of importance weights}
\label{sect:iw_var}

The importance-sampled estimate of the log-likelihood used to retrain the oracle (Equation \ref{eq:oracle_update}) is unbiased, but may have high variance due to the variance of the importance weights. To assess the reliability of the importance-sampled estimate, alongside the effective sample size described in \cref{sect:remarks}, one can also monitor confidence intervals on some loss of interest. Let $\mathcal{L}_\beta: \mathcal{X} \times \reals \rightarrow \reals$ denote a pertinent loss function induced by the oracle parameters, $\beta$, (\eg, the squared error $\mathcal{L}_\beta(\x, y) = (\E_\beta[y \mid \x] - y)^2$). The following observation is due to Chebyshev's inequality. 

\begin{propositionsupp} \label{prop:chebyshev} Suppose that $\mathcal{L}_\beta: \mathcal{X} \times \reals \rightarrow \reals$ is a bounded loss function, such that $|\mathcal{L}_\beta(\x, y)| \leq L$ for all $\x, y$, and that $p_\theta \ll p_0$. Let $\trainset$ be labeled training data such that the $\xsupi \sim p_0(\x)$ are drawn independently and $\ysupi \sim p(y \mid \x = \xsupi)$ for each $i$. For any $\delta \in (0, 1]$ and any $n > 0$, with probability at least $1 - \delta$ it holds that
\begin{align}
    \left|\E_{\searchdis} \E_{p(y \mid \x)} [\mathcal{L}_\beta(\x, y)] - \frac{1}{n} \sum_{i = 1}^n \frac{p_\theta(\xsupi)}{p_0(\xsupi)} \mathcal{L}_\beta(\xsupi, \ysupi) \right| \leq L \sqrt{\frac{d_2(p_\theta \, || \,p_0)}{n\delta}}
\end{align}
where $d_2$ is the exponentiated R\'enyi-$2$ divergence, \ie, $d_2(p_\theta \, || \, p_0) = \E_{\traindis}[(p_\theta(\x)/p_0(\x))^2]$.
\end{propositionsupp}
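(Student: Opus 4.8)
The plan is to recognize the importance-sampled estimator as an empirical average of i.i.d.\ random variables, establish that it is unbiased for the target quantity, control its variance through the R\'enyi-$2$ divergence, and then invoke Chebyshev's inequality. Concretely, I would define $Z_i \coloneqq (p_\theta(\xsupi)/p_0(\xsupi))\,\mathcal{L}_\beta(\xsupi, \ysupi)$, so that the estimator appearing in the statement is $\hat{\mu} \coloneqq \frac{1}{n}\sum_{i=1}^n Z_i$. Because the pairs $(\xsupi, \ysupi)$ are drawn independently, with $\xsupi \sim p_0$ and $\ysupi \sim p(y \mid \x = \xsupi)$, the $Z_i$ are i.i.d., which is what makes the variance of $\hat\mu$ scale like $1/n$.

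First I would verify unbiasedness. Taking the expectation over $y$ conditional on $\x$ and then over $\x \sim p_0$, the hypothesis $p_\theta \ll p_0$ guarantees that the weights $p_\theta/p_0$ are well-defined ($p_0$-almost everywhere finite), and the standard change-of-measure identity gives $\E[Z_i] = \E_{\searchdis}\E_{p(y\mid\x)}[\mathcal{L}_\beta(\x, y)] \eqqcolon \mu$, the quantity we wish to estimate; hence $\E[\hat\mu] = \mu$. Next I would bound the variance. Since the $Z_i$ are i.i.d., $\Var(\hat\mu) = \Var(Z_1)/n \le \E[Z_1^2]/n$, and using $|\mathcal{L}_\beta| \le L$ pointwise together with another change of measure back to $p_0$,
\[
\E[Z_1^2] = \E_{\traindis}\E_{p(y\mid\x)}\!\left[\left(\tfrac{p_\theta(\x)}{p_0(\x)}\right)^2 \mathcal{L}_\beta(\x, y)^2\right] \le L^2\, \E_{\traindis}\!\left[\left(\tfrac{p_\theta(\x)}{p_0(\x)}\right)^2\right] = L^2\, d_2(p_\theta \,||\, p_0),
\]
so that $\Var(\hat\mu) \le L^2 d_2(p_\theta\,||\,p_0)/n$.

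Finally I would apply Chebyshev's inequality: $\Prob(|\hat\mu - \mu| \ge t) \le \Var(\hat\mu)/t^2 \le L^2 d_2(p_\theta\,||\,p_0)/(n t^2)$. Setting the right-hand side equal to $\delta$ and solving for $t$ yields $t = L\sqrt{d_2(p_\theta\,||\,p_0)/(n\delta)}$, which is precisely the claimed deviation bound holding with probability at least $1-\delta$.

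The argument is entirely routine and I do not anticipate a genuine obstacle. The only points requiring minor care are (i) justifying the two change-of-measure steps, which rely on the absolute-continuity assumption $p_\theta \ll p_0$ so that the importance weights are finite $p_0$-almost everywhere, and (ii) bounding the variance by the raw second moment $\E[Z_1^2]$ rather than computing it exactly---this is what lets the R\'enyi-$2$ divergence appear cleanly. Boundedness of $\mathcal{L}_\beta$ enters only to factor out $L^2$ from the second moment.
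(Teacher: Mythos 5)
Your proof is correct and follows essentially the same route as the paper: the paper also bounds the variance of the importance-sampled estimator by $\tfrac{1}{n}L^2 d_2(p_\theta\,\|\,p_0)$ (citing an adaptation of Lemma 4.1 of Metelli et al.\ for this step, which you prove inline via the second-moment bound and change of measure) and then applies Chebyshev's inequality. No gaps; your treatment of unbiasedness and absolute continuity is if anything more explicit than the paper's.
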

\begin{proof}
We use the following lemma to bound the variance of the importance sampling estimate of the loss. Chebyshev's inequality then yields the desired result.
\end{proof}
\begin{lemmasupp}[Adaptation of Lemma 4.1 in Metelli et al. (2018) \cite{Metelli2018-eh}]
Under the same assumptions as Proposition \ref{prop:chebyshev}, the joint distribution $\searchdis p(y \mid \x)$ is absolutely continuous with respect to the joint distribution $\traindis p(y \mid \x)$. Then for any $n > 0$, it holds that
\begin{align}
    \Var_{\searchdis p(y \mid \x)}\left[\frac{1}{n} \sum_{i = 1}^n \frac{p_\theta(\xsupi)}{p_0(\xsupi)} \mathcal{L}_\beta(\xsupi, \ysupi)\right] & \leq \frac{1}{n} L^2 d_2(p_\theta || p_0).
\end{align}
\end{lemmasupp}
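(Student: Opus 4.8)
The plan is to reduce the variance of the average to the variance of a single summand via the i.i.d.\ assumption, bound that single-term variance by a raw second moment, pull out the loss bound $L$, and finally recognize the remaining second moment of the importance weight as the exponentiated R\'enyi-$2$ divergence $d_2(p_\theta \,||\, p_0)$.

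First I would set up the sampling measure carefully. Under the stated assumptions the pairs $(\xsupi, \ysupi)$ are i.i.d.\ draws from the joint $p_0(\x) p(y \mid \x)$: the covariates from the training distribution $p_0$ and the labels from the ground-truth conditional. Writing $w_i \coloneqq p_\theta(\xsupi)/p_0(\xsupi)$ and $Z_i \coloneqq w_i \, \mathcal{L}_\beta(\xsupi, \ysupi)$, independence gives $\Var[\tfrac{1}{n}\sum_{i=1}^n Z_i] = \tfrac{1}{n}\Var[Z_1]$, so it suffices to bound the variance of one term. I would then dominate the variance by the second moment, $\Var[Z_1] \leq \E[Z_1^2] = \E[w_1^2 \, \mathcal{L}_\beta(\x_1,y_1)^2]$, use the uniform bound $|\mathcal{L}_\beta| \leq L$ to obtain $\E[w_1^2 \, \mathcal{L}_\beta^2] \leq L^2 \, \E[w_1^2]$, and finally invoke the definition $\E_{p_0}[(p_\theta/p_0)^2] = d_2(p_\theta \,||\, p_0)$ supplied in Proposition~\ref{prop:chebyshev}. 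Chaining these inequalities yields $\Var[\tfrac{1}{n}\sum_i Z_i] \leq \tfrac{1}{n} L^2 \, d_2(p_\theta \,||\, p_0)$, as claimed.

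The role of absolute continuity is to make this bookkeeping rigorous rather than to supply any real difficulty: $p_\theta \ll p_0$ ensures the weights $w_i$ are well-defined $p_0$-almost everywhere, and it gives the stated joint absolute continuity because the label conditional $p(y\mid\x)$ is common to both joint densities. Consequently the Radon--Nikodym derivative of the target joint with respect to the sampling joint factors as the covariate weight $p_\theta(\x)/p_0(\x)$, with the conditional cancelling. I expect the only genuinely delicate point to be this measure-theoretic step: one should confirm that the variance and expectation are taken under the sampling law $p_0(\x)p(y\mid\x)$ (so that the estimator is unbiased for $\E_{p_\theta}\E_{p(y\mid\x)}[\mathcal{L}_\beta]$ and the weights appear correctly), and that the conditional indeed cancels so the second moment collapses to exactly the R\'enyi-$2$ divergence. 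Everything else is a routine second-moment computation, and the bound holds trivially (both sides infinite) in the degenerate case $d_2(p_\theta \,||\, p_0) = \infty$.
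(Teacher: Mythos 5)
Your proof is correct and is exactly the standard second-moment argument behind Lemma 4.1 of Metelli et al., which the paper cites rather than reproving: i.i.d.\ sampling reduces the variance of the average to $\tfrac{1}{n}\Var[Z_1]$, the variance is dominated by $\E[Z_1^2] \leq L^2\,\E_{\traindis}[(p_\theta(\x)/p_0(\x))^2] = L^2 d_2(p_\theta\,||\,p_0)$, and the conditional $p(y \mid \x)$ cancels in the importance weight. Your side remark about the sampling law is also on point: the subscript $\Var_{\searchdis p(y \mid \x)}$ in the stated lemma should be read as the sampling distribution $\traindis p(y \mid \x)$, under which the estimator is unbiased and the claimed bound is the one that holds.
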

One can use Proposition \ref{prop:chebyshev} to construct a confidence interval on, for example, the expected squared error between the oracle and the ground-truth values with respect to $\searchdis$, using the labeled training data on hand. 
The R\'enyi divergence can be estimated using, for example, the plug-in estimate $(1/n)\sum_{i = 1}^n (p_\theta(\xsupi) / p_0(\xsupi))^2$. 
While the bound, $L$, on $\mathcal{L}_\beta$ may be restrictive in general, for any given application one may be able to use domain-specific knowledge to estimate $L$. For example, in designing superconducting materials with maximized critical temperature, one can use an oracle architecture whose outputs are non-negative and at most some plausible maximum value $M$ (in degrees Kelvin) according to superconductivity theory; one could then take $L = M^2$ for squared error loss. Computing a confidence interval at each iteration of a design procedure then allows one to monitor the error of the retrained oracle.

Monitoring such confidence intervals, or the effective sample size, is most likely to be useful for design procedures that do not have in-built mechanisms for restricting the movement of the search distribution away from the training distribution. Various algorithmic interventions are possible---one could simply terminate the procedure if the error bounds, or effective sample size, surpass some threshold, or one could decide not to retrain the oracle for that iteration. For simplicity and clarity of exposition, we did not use any such interventions in this paper, but we mention them as potential avenues for further improving autofocusing in practice. Note that 1) the bound in Proposition \ref{prop:chebyshev} is only useful if the importance weight variance is finite, and 2) estimating the bound itself requires use of the importance weights, and thus may also be susceptible to high variance. It may therefore be advantageous to use a liberal criterion for any interventions.

\paragraph{CbAS naturally controls the importance weight variance.} Design procedures that leverage a trust region can naturally bound the variance of the importance weights. For instance, CbAS \cite{Brookes2019-vw}, developed in the context of an oracle with fixed parameters, $\beta$, proposes estimating the training distribution conditioned on $S$ as the search model:
\begin{align}\label{eq:cbas}
\searchdis &  = p_0(\x \mid S) = P_\betap(S \mid \x) p_0(\x) / P_0(S),
\end{align}
where \mbox{$P_0(S) = \int P_\betap(S \mid \x) p_0(\x) dx$}. This prescribed search model yields the following importance weight variance.

\begin{propositionsupp} \label{prop:cbas} For $\searchdis = p_0(\x \mid S)$, it holds that
\begin{align}
    \Var_{\traindis}\left( \frac{\searchdis}{p_0(\x)} \right) = \frac{1}{P_0(S)} - 1.
\end{align}
\end{propositionsupp}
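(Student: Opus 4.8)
The plan is to prove the identity by a direct second-moment computation of the importance weight $w(\x) \coloneqq \searchdis / \traindis$ under the training distribution, using the standard decomposition $\Var_{\traindis}(w) = \E_{\traindis}[w^2] - (\E_{\traindis}[w])^2$. The only structural input I need is the explicit CbAS form of the search model from Equation \ref{eq:cbas}, namely $\searchdis = P_\betap(S \mid \x)\,\traindis / P_0(S)$.

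First I would substitute this form into the weight and cancel the common factor $\traindis$, which gives the clean expression $w(\x) = P_\betap(S \mid \x) / P_0(S)$; the weight no longer depends on $\traindis$ except through the normalizer $P_0(S)$. Next I would compute the first moment $\E_{\traindis}[w] = P_0(S)^{-1} \int P_\betap(S \mid \x)\,\traindis\, d\x$, where the integral is exactly $P_0(S)$ by its definition $P_0(S) = \int P_\betap(S \mid \x)\, \traindis\, d\x$, so $\E_{\traindis}[w] = 1$. This both confirms that the importance weights are correctly normalized in expectation and reduces the target to $\Var_{\traindis}(w) = \E_{\traindis}[w^2] - 1$.

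The remaining work is the second moment, $\E_{\traindis}[w^2] = P_0(S)^{-2}\int P_\betap(S \mid \x)^2\, \traindis\, d\x$, and this is the step I expect to carry the real content. To land on the stated closed form $1/P_0(S) - 1$ I need $\int P_\betap(S \mid \x)^2\,\traindis\, d\x = P_0(S)$, i.e. the second moment of the constraint-satisfaction quantity must equal its first moment. This holds precisely when $P_\betap(S \mid \x)$ is idempotent, $P_\betap(S \mid \x)^2 = P_\betap(S \mid \x)$, in which case the integral collapses to $P_0(S)$ again, giving $\E_{\traindis}[w^2] = 1/P_0(S)$ and hence $\Var_{\traindis}(w) = 1/P_0(S) - 1$.

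The main obstacle is therefore conceptual rather than computational: the whole result hinges on the idempotency $P_\betap(S \mid \x)^2 = P_\betap(S \mid \x)$, which is exactly the statement that constraint satisfaction is a binary (indicator) event. If instead $P_\betap(S \mid \x)$ were a genuine $[0,1]$-valued probability that is not $\{0,1\}$-valued, then $P_\betap(S \mid \x)^2 \le P_\betap(S \mid \x)$ would yield only the inequality $\Var_{\traindis}(w) \le 1/P_0(S) - 1$, so I would want to confirm that the intended reading treats $S$ as a hard event; once that is fixed the equality is immediate. A useful sanity check along the way is that $w(\x) \le 1/P_0(S)$ is bounded, so the variance is automatically finite, which is consistent with the paper's claim that the CbAS trust region naturally controls the importance-weight variance.
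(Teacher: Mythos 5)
Your argument is, at its core, the same computation as the paper's: both reduce the variance to $\E_{\traindis}[w^2]-1$ with $w=\searchdis/\traindis$, and $\E_{\traindis}[w^2]$ is by definition the exponentiated R\'enyi-$2$ divergence $d_2(p_\theta\,\|\,p_0)$ that the paper's proof works with. The difference is that the paper evaluates $d_2$ by citing Example 1 of Van Erven and Harremo\"es, which gives $d_2(Q(\cdot\mid A)\,\|\,Q)=1/Q(A)$ for a measure conditioned on an event in its \emph{own} sample space, whereas you compute the second moment directly from the CbAS form $w(\x)=P_\betap(S\mid\x)/P_0(S)$. Your explicit route is the more informative one, because it surfaces a subtlety the citation glosses over: $p_0(\x\mid S)$ is not $p_0$ restricted to an event in $\mathcal{X}$, but the $\x$-marginal of the joint $p_0(\x)\,p_\betap(y\mid\x)$ conditioned on $\{y\in S\}$, and marginalizing can only decrease the R\'enyi divergence. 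Concretely, as you observe, $P_\betap(S\mid\x)^2\le P_\betap(S\mid\x)$ yields only $\Var_{\traindis}(w)\le 1/P_0(S)-1$, with equality iff $P_\betap(S\mid\x)\in\{0,1\}$ almost surely under $p_0$. For the Gaussian oracles actually used in the paper, $P_\betap(S\mid\x)$ is a tail probability strictly inside $(0,1)$, so the proposition as stated should really be an inequality; this is an imprecision in the statement rather than a gap in your proof, and it is harmless downstream because the surrounding discussion only uses the result as an upper bound on the importance-weight variance (equivalently, a lower bound on the effective sample size).
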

That is, so long as $S$ has non-neglible mass under data drawn from the training distribution, $\traindis$, we have reasonable control on the variance of the importance weights. Of course, if $P_0(S)$ is too small, this bound is not useful, but to have any hope of successful data-driven design it is only reasonable to expect this quantity to be non-negligible. This is similar to the experimental requirement, in directed evolution for protein design, that the initial data exhibit some ``minimal functionality'' with regards to the property of interest \cite{Yang2019-ac}.
\begin{proof}
The variance of the importance weights can be written as

\begin{align}
    \Var_{\traindis}\left( \frac{p_0(\x \mid S)}{\traindis} \right) = d_2(p_0(\x \mid S) || \traindis) - 1,
\end{align}
where $d_2(p_0(\x \mid S) || \traindis) = \E_{\traindis}[(p_0(\x \mid S) / \traindis)^2]$ is the exponentiated R\'enyi-$2$ divergence. Then we have
\begin{align}
    \Var_{\traindis}\left( \frac{\searchdis}{\traindis} \right) & = d_2(p_0(\x \mid S) || \traindis) - 1 = \frac{1}{p_0(S)} - 1,
\end{align}
where the second equality is due to the property in Example 1 of \cite{Van_Erven2014-zs}.
\end{proof}
This variance yields the following expression for the population version of the effective sample size:
\begin{align}
    n_e^*  \coloneqq \frac{n \E_{p_0(x)}\left [ \searchdis/p_0(\x) \right]^2 }{\E_{p_0(x)}\left [ (\searchdis/p_0(\x))^2 \right]} & = \frac{n}{\E_{p_0(x)}\left [ (\searchdis/p_0(\x))^2  \right]} = n P_0(S).
\end{align}
Furthermore, CbAS proposes an iterative procedure to estimate $\searchdis$. At iteration $t$, the algorithm seeks a variational approximation to \mbox{$p^{(t)}(\x) \propto P_\betap(S^{(t)} \mid \x) p_0(\x)$}, where $S^{(t)} \supseteq S$. Since \mbox{$P_0(S^{(t)} \mid \x) \geq P_0(S \mid \x)$}, the expressions above for the importance weight variance and effective sample size for the final search model prescribed by CbAS translate into upper and lower bounds, respectively, on the importance weight variance and effective sample size for the distributions $p^{(t)}(\x)$ prescribed at each iteration.


\section{An illustrative example}
\label{sect:toy_supp}

\subsection{Experimental details}

\paragraph{Ground truth and oracle.} For the ground-truth function $f: \reals \rightarrow \reals^+$, we used the sum of the densities of two Gaussian distributions, $\mathcal{N}_1(5, 1)$ and $\mathcal{N}_2(7, 0.25)$. For the expectation of the oracle model, $\mu_\betap(\x) \coloneqq \E_\beta[y \mid \x]$, we used kernel ridge regression with a radial basis function kernel as implemented in \texttt{scikit-learn}, with the default values for all hyperparameters. The variance of the oracle model, $\sigma_\betap^2 \coloneqq \Var_\beta[y \mid \x]$, was set to the mean squared error between $\mu_\betap(\x)$ and the training data labels, as estimated with $4$-fold importance-weighted cross-validation when autofocusing \cite{Sugiyama2007-sl}.

\paragraph{MBO algorithm.} We used CbAS as follows. At iteration \mbox{$t = 1, \ldots, 100$}, similar to \cite{Brookes2019-vw}, we used the relaxed constraint set $S^{(t)} = \{y : y \geq \gamma_t \}$ where $\gamma_t$ was the $\sth{t}$ percentile of the oracle expectation, $\mu_\betap(\x)$, when evaluated over $\x \in [0, 10]$. At the final iteration, $t = 100$, the constraint set is equivalent to the design goal of maximizing the oracle expectation, $S^{(100)} = S = \{y: y \geq \max_\x \mu_\betap(\x)\}$, which is the oracle-based proxy to maximizing the ground-truth function, $f(\x)$. At each iteration, we used numerical quadrature (\texttt{scipy.integrate.quad}) to compute the search model,
\begin{align}
    p^{(t)}(\x) = \frac{  \Pbetat(y \in S^{(t)} \mid \x) \, \traindis}{\int_\mathcal{X}\Pbetat(y \in S^{(t)} \mid \x) \, \traindis}.
\end{align}
Numerical integration enabled us to use CbAS without a parametric search model, which otherwise would have been used to find a variational approximation to this distribution \cite{Brookes2019-vw}. We also used numerical integration to compute the value of the model-based design objective (Equation \ref{eq:mbo}) achieved by the final search model, both with and without autofocusing.

\subsection{Additional plots and discussion}

For all tested settings of the variance of the training distribution, $\sigma_0^2$, and the variance of the label noise, $\sigma_\epsilon^2$, autofocusing yielded positive improvement to the model-based design objective (Equation \ref{eq:mbo}) on average over $50$ trials (Figure \ref{fig:toy_improvement}). For a more comprehensive understanding of the effects of autofocusing, here we pinpoint specific trials where autofocusing decreased the objective, compared to a naive approach with a fixed oracle. Such trials were rare, and occurred in regimes where one would not reasonably expect autofocusing to provide a benefit. In particular, as discussed in \cref{sect:toy}, such regimes include when $\sigma_0^2$ is too small, such that training data are unlikely to be close to the global maximum, and when $\sigma_0^2$ is too large, such that the training data already include points around the global maximum and a fixed oracle should be suitable for successful design. Similarly, when the label noise variance, $\sigma_\epsilon^2$, is too large, the training data are no longer informative and no procedure should hope to perform well systematically. We now walk through each of these regimes.

When $\sigma_0^2$ was small and there was no label noise, we observed a few trials where the final search model placed less mass under the global maximum with autofocusing than without. This effect was due to increased standard deviation of the autofocused oracle, induced by high variance of the importance weights (Figure \ref{fig:tr16lab0neg}). When $\sigma_0^2$ was small and $\sigma_\epsilon^2$ was extremely large, a few trials yielded lower final objectives with autofocusing by insignificant margins; in such cases, the label noise was overwhelming enough that the search model did not move much anyway, either with or without autofocusing (Figure \ref{fig:tr16lab38neg}). Similarly, when $\sigma_0^2$ was large and there was no label noise, a few trials yielded lower final objectives with autofocusing than without, by insignificant margins (Figure \ref{fig:tr22lab0neg}).

\begin{figure}
\begin{subfigure}{\textwidth}
  \centering
  \includegraphics[width=0.95\linewidth]{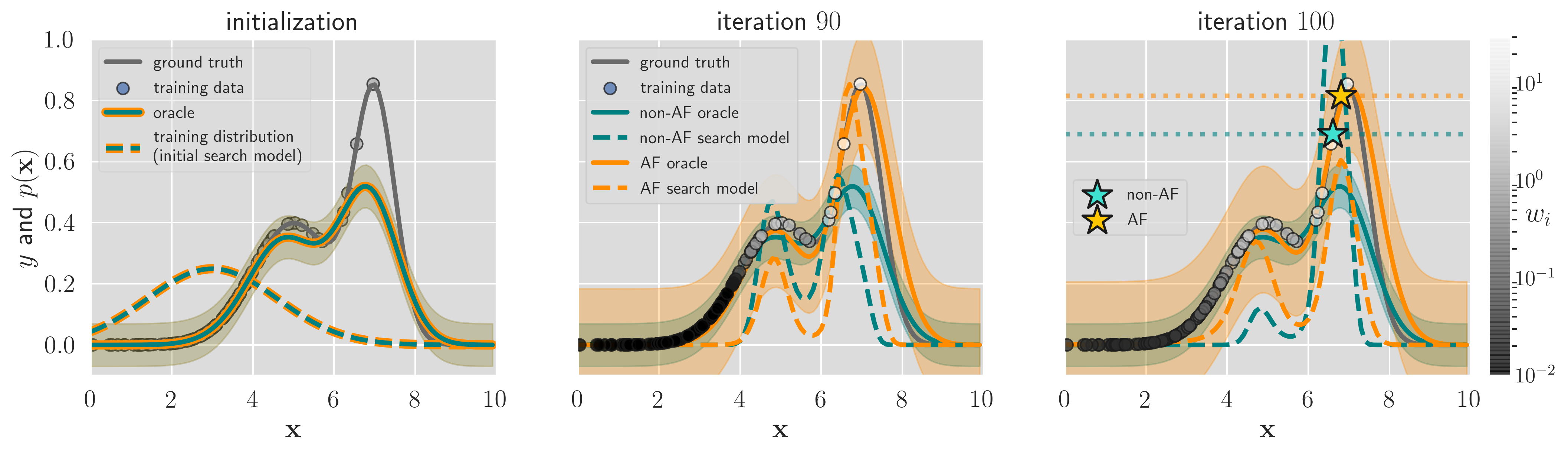}
  \caption{Example trial with low-variance training distribution and no label noise, $(\sigma_0, \sigma_\epsilon) = (1.6, 0)$.}
  \label{fig:tr16lab0neg}
\end{subfigure}
\begin{subfigure}{\textwidth}
  \centering
  \includegraphics[width=0.95\linewidth]{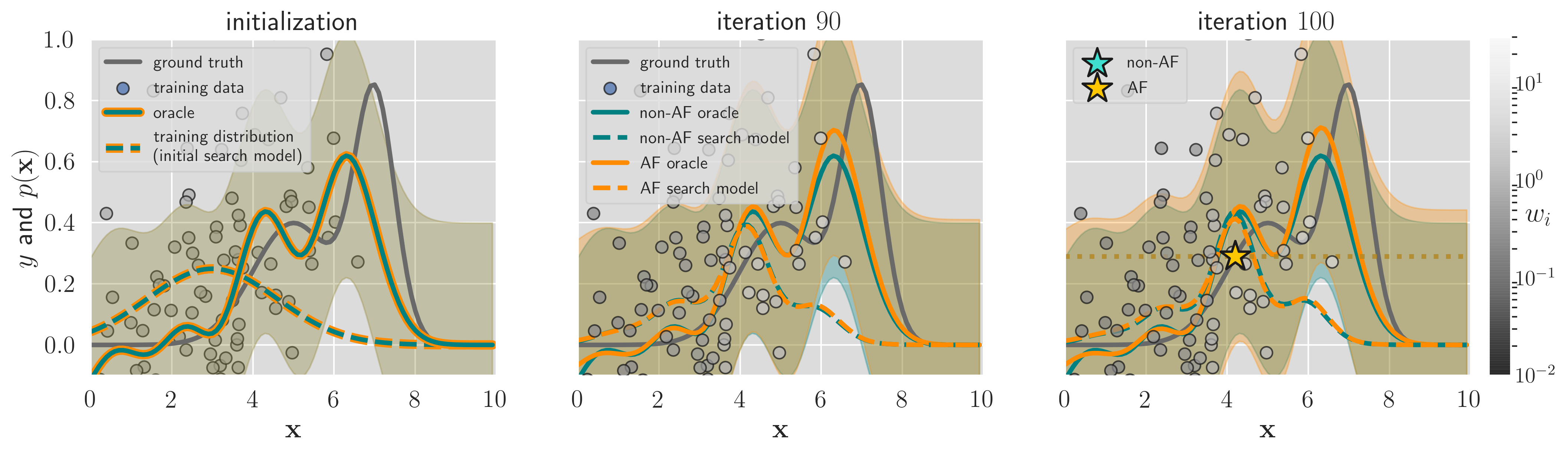}
  \caption{Example trial with low-variance training distribution and high label noise, $(\sigma_0, \sigma_\epsilon) = (1.6, 0.38)$.}
  \label{fig:tr16lab38neg}
\end{subfigure}
\begin{subfigure}{\textwidth}
  \centering
  \includegraphics[width=0.95\linewidth]{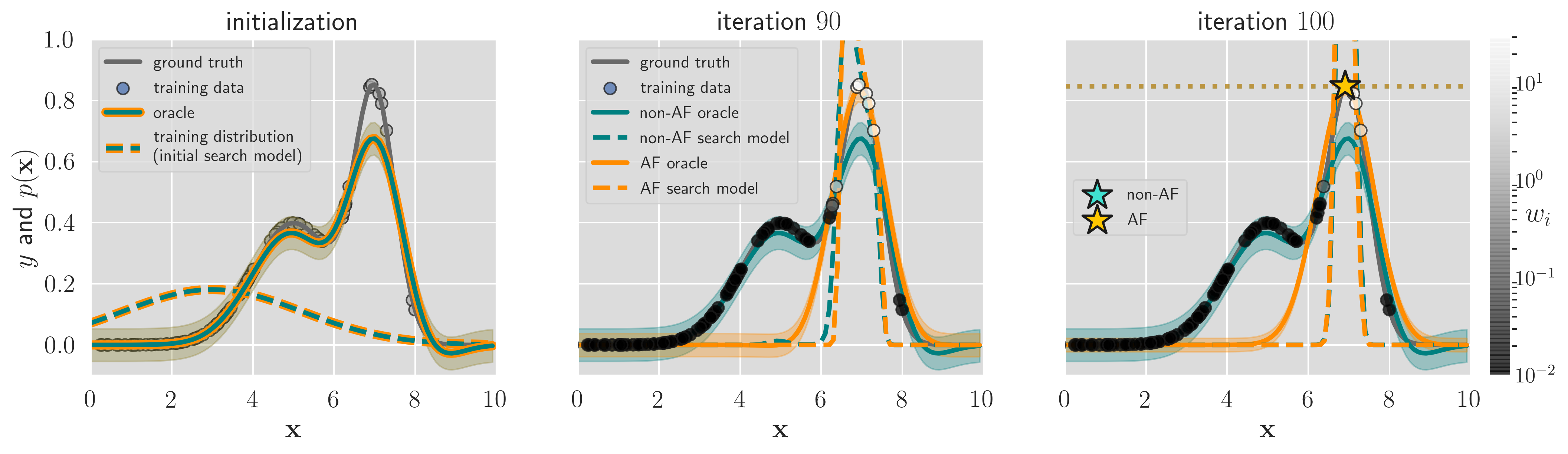}
  \caption{Example trial with high-variance training distribution and no label noise $(\sigma_0, \sigma_\epsilon) = (2.2, 0)$.}
  \label{fig:tr22lab0neg}              
\end{subfigure}
\begin{subfigure}{\textwidth}
\centering
\includegraphics[width=0.95\linewidth]{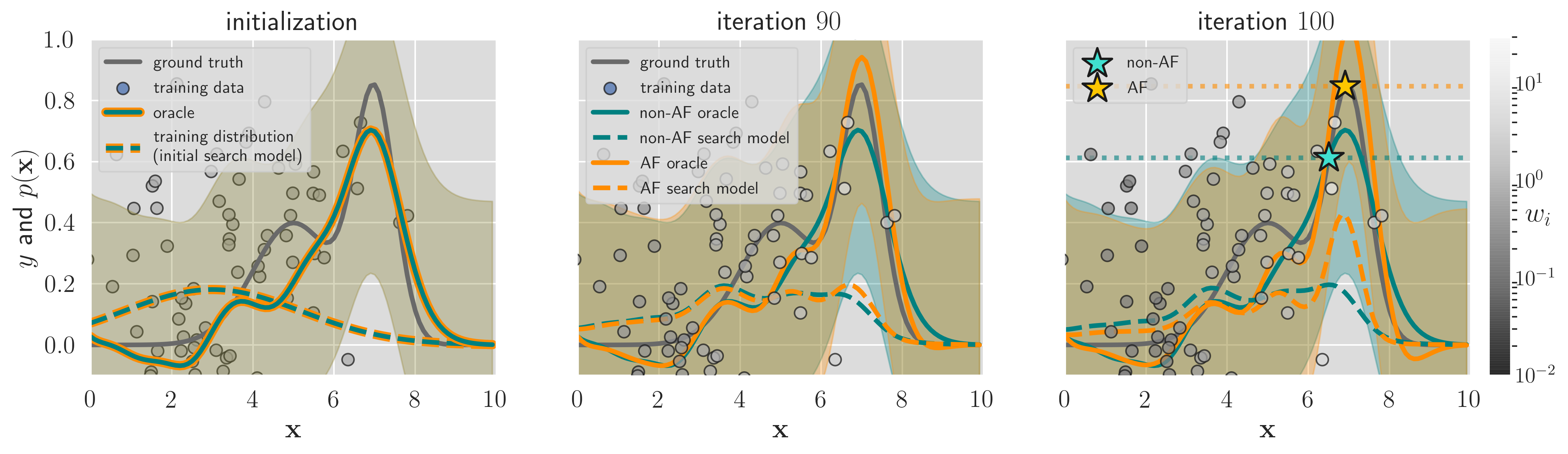}
\caption{Example trial with high-variance training distribution and high label noise $(\sigma_0, \sigma_\epsilon) = (2.2, 0.38)$.}
\label{fig:tr22lab38pos}
\end{subfigure}
\caption{Examples of regimes where autofocus (AF) sometimes yielded lower final objectives than without (non-AF). Each row shows snapshots of CbAS in a different experimental regime, from initialization (leftmost panel), to an intermediate iteration (middle panel), to the final iteration (rightmost panel).
As in Figure \ref{fig:toy_demo}, the vertical axis represents both $y$ values (for the oracle and ground truth) and probability density values (of the training distribution, $\traindis$, and search distributions, $\searchdist$). Shaded envelopes correspond to $\pm1$ standard deviation of the oracles, $\sigma_{\beta^{(t)}}$, with the oracle expectations, $\mu_{\beta^{(t)}}(\x)$, shown as a solid line. Greyscale of training points corresponds to their importance weights used in autofocusing. In the rightmost panels, for easy visualization of the final search models achieved with and without AF, the stars and dotted horizontal lines indicate the ground-truth values corresponding to the points of maximum density.}
\label{fig:toy_examples}
\end{figure}

Interestingly, when the variances of both the training distribution and label noise were high, autofocusing yielded positive improvement for all trials. In this regime, by encouraging the oracle to fit most accurately to the points with the highest labels, autofocusing resulted in search models with greater mass under the global maximum than the fixed-oracle approach, which was more influenced by the extreme label noise (Figure \ref{fig:tr22lab38pos}).

As discussed in \cref{sect:toy}, in practice it is often the case that 1) practitioners can collect reasonably informative training data for the application of interest, such that some exceptional examples are measured (corresponding to sufficiently large $\sigma_0^2$), and 2) there is always label noise, due to measurement error (corresponding to non-zero $\sigma_\epsilon^2$). Thus, we expect many design applications in practice to fall in the intermediate regime where autofocusing tends to yield positive improvements over a fixed-oracle approach (Figure \ref{fig:toy_improvement}, Table \ref{tab:stats}).

\section{Designing superconductors with maximal critical temperature}
\label{sect:supercon}

\subsection{Experimental details}

\begin{figure}
\centering
\includegraphics[width=\linewidth]{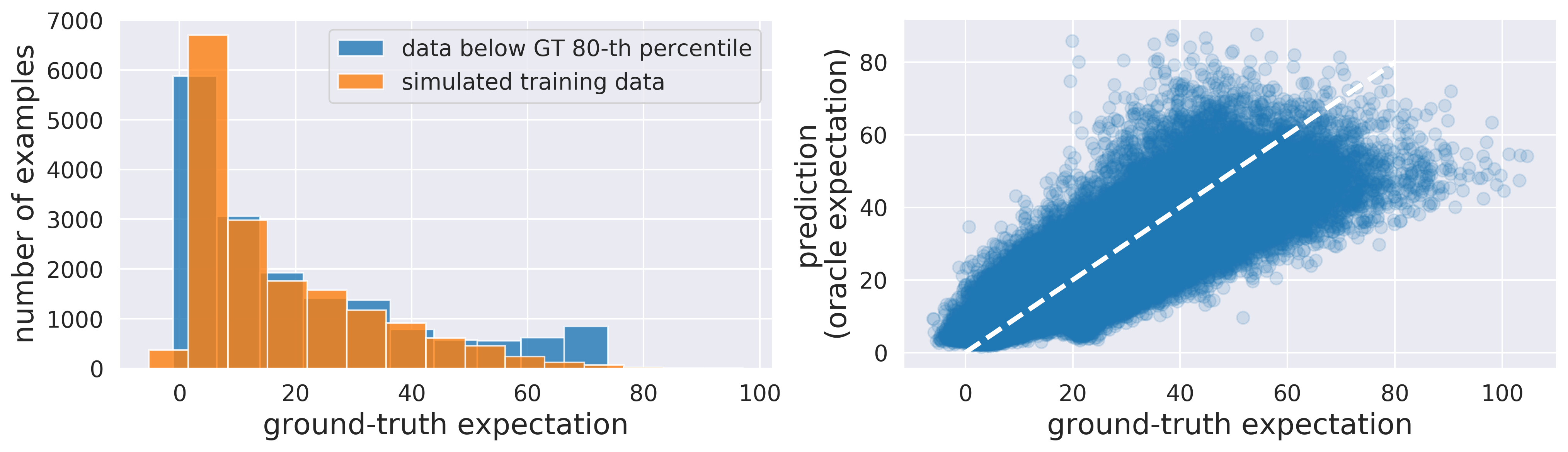}
\caption{Training distribution and initial oracle for designing superconductors. Simulated training data were generated from a training distribution, $\traindis$, which was a multivariate Gaussian fit to data points with ground-truth expectations below the $\sth{80}$ percentile. The left panel shows histograms of the ground-truth expectations of these original data points, and the ground-truth expectations of simulated training data. The right panel illustrates the error of an initial oracle used in the experiments, by plotting the ground-truth and predicted labels of $10,000$ test points drawn from the training distribution. The RMSE here was $7.31$.}
\label{fig:supercon_init}
\end{figure}

\paragraph{Pre-processing.} Each of the $21,263$ materials in the superconductivity data from \cite{Hamidieh2018-yu} is represented by a vector of eighty-one real-valued features. We zero-centered and normalized each feature to have unit variance.

\paragraph{Ground-truth model.} To construct the model of the ground-truth expectation, $\E[y \mid \x]$, we fit gradient-boosted regression trees using \texttt{xgboost} and the same hyperparameters reported in \cite{Hamidieh2018-yu}, which selected them using grid search. The one exception was that we used $200$ trees instead of $750$ trees, which yielded a hold-out root mean squared error (RMSE) of $9.51$ compared to the hold-out RMSE of $9.5$ reported in \cite{Hamidieh2018-yu}. To remove collinear features noted in \cite{Hamidieh2018-yu}, we also performed feature selection by thresholding \texttt{xgboost}'s in-built feature weights, which quantifies how many times a feature is used to split the data across all trees. We kept the sixty most important features according to this score, which decreased the hold-out RMSE from $9.51$ when using all the features to $9.45$. The resulting input space for design was then $\mathcal{X} = \reals^{60}$.

\paragraph{Training distribution.} To construct the training distribution, we selected the $17,015$ points from the dataset whose ground-truth expectations were below the $\sth{80}$ percentile (equivalent to $73.8$ degrees Kelvin, compared to the maximum of $138.3$ degrees Kelvin in the full dataset). We used MLE with these points to fit a full-rank multivariate normal, which served as the training distribution, $\traindis$, from which we drew $n = 17,015$ simulated training points, $\{\xsupi\}_{i = 1}^n$, for each trial. For each $\xsupi$ we drew one sample, $\ysupi \sim \mathcal{N}(\E[y \mid \xsupi], 1)$, to obtain a noisy ground-truth label. This training distribution produced simulated training points with a distribution of ground-truth expectations, $\E[y \mid \x]$, reasonably comparable to that of the points from the original dataset (Figure \ref{fig:supercon_init}, left panel).

\paragraph{Oracle.} For the oracle, we trained an ensemble of three neural networks to maximize log-likelihood according to the method described in \cite{Lakshminarayanan2016-um} (without adversarial examples). Each model in the ensemble had the architecture $\texttt{Input(60)} \rightarrow \texttt{Dense(100)} \rightarrow \texttt{Dense(100)} \rightarrow \texttt{Dense(100)} \rightarrow \texttt{Dense(100)} \rightarrow \texttt{Dense(10)} \rightarrow \texttt{Dense(2)}$, with $\texttt{elu}$ nonlinearities everywhere except for linear output units. Out of the range of hidden layer numbers and sizes we tested, this architecture minimized RMSE on held-out data. Each model was trained using Adam \cite{Kingma2014-ty} with a learning rate of $5 \times 10^{-4}$ for a maximum of $2000$ epochs, with a batch size of $64$ and early stopping based on the log-likelihood of a validation set. Across the $10$ trials, the initial oracles had hold-out RMSEs between $6.95$ and $7.40$ degrees Kelvin (Figure \ref{fig:supercon_init}, right panel).

\paragraph{Autofocusing.} During autofocusing, each model in the oracle ensemble was retrained with the re-weighted training data, using the same optimization hyperparameters as the initial oracle, except early stopping was based on the re-weighted log-likelihood of the validation set. For the results in Table \ref{tab:stats}, to help control the variance of the importance weights, we flattened the importance weights to $w_i^\alpha$ where $\alpha = 0.2$ \cite{Sugiyama2007-sl} and also self-normalized them \cite{mcbook}. We found that autofocusing yielded similarly widespread benefits for a wide range of values of $\alpha$, including $\alpha = 1$, which corresponds to a ``pure'' autofocusing strategy without variance control (Table \ref{tab:stats_alpha1}).

\paragraph{MBO algorithms.} Here, we provide a brief description of the different MBO algorithms used in the superconductivity experiments (Tables \ref{tab:stats}, \ref{tab:stats_alpha1}, \ref{tab:stats_big}, \ref{tab:stats_small}, Figures \ref{fig:supercon_traj} and \ref{fig:supercon_traj2}). Wherever applicable, in parentheses we anchor these descriptions in the notation and procedure of Algorithm \ref{alg:1}.
\begin{itemize}
    \item \ti{Design by Adaptive Sampling (DbAS)} \cite{Brookes2019-vw}. A basic EDA that anneals a sequence of relaxed constraint sets, $S^{(t)}$, such $S^{(t)} \supseteq S^{(t + 1)} \supseteq S$, to iteratively solve the oracle-based MBD problem (Equation \ref{eq:mbo_oracle}). (At iteration $t$, DbAS uses \mbox{$V(\tildext_i) = \Pbetatminus(y \in S^{(t)} \mid \tildext_i)$}.)

    \item \ti{Conditioning by Adaptive Sampling (CbAS)} \cite{Brookes2019-vw}. Seeks to estimate the training distribution conditioned on the desired constraint set $S$ (Equation \ref{eq:cbas}). Similar mechanistically to DbAS, as it involves constructing a sequence of relaxed constraint sets, but also incorporates an implicit trust region based on the training distribution. (At iteration $t$, CbAS uses \mbox{$V(\tildext_i) = (p_0(\tildext_i) / p_{\theta^{(t - 1)}}(\tildext_i)) \Pbetatminus(y \in S^{(t)} \mid \tildext_i)$}. See Algorithm \ref{alg:cbas}; non-autofocused CbAS excludes Steps \ref{step:iw} and \ref{step:retrain}.)
    
    \item \ti{Reward-Weighted Regression (RWR)} \cite{Peters2007-db}. An EDA used in the reinforcement learning community. (At iteration $t$, RWR uses \mbox{$V(\tildext_i) = v'_i / \sum_{j = 1}^m v'_j$}, where \mbox{$v'_i = \exp(\gamma \E_{\beta^{(t - 1)}}[y \mid \tildext_i]))$} and $\gamma > 0$ is a hyperparameter).
    
    \item \ti{``Feedback'' Mechanism (FB)} \cite{Gupta2019-yd}. A heuristic version of CbAS, which maintains samples from previous iterations to prevent the search model from changing too rapidly. (At Step \ref{mstep} in Algorithm \ref{alg:1}, FB uses samples from the current iteration with oracle expectations that surpass some percentile threshold, along with a subset of promising samples from previous iterations.)
    
    \item \ti{Cross-Entropy Method with Probability of Improvement (CEM-PI)} \cite{Brookes2019-vw}. A baseline EDA that uses the cross-entropy method \cite{Rubinstein1997, Rubinstein1999} to maximize the probability of improvement, an acquisition function commonly used in Bayesian optimization \cite{Snoek2012-wg}. (At iteration $t$, CEM-PI uses \mbox{$V(\tildext_i) = \mathbf{1}[\Pbetat(y \geq y_\text{max} \mid \tildext_i) \geq \gamma_t]$}, where $y_\text{max}$ is the maximum label observed in the training data, and, following the cross-entropy method, $\gamma_t$ is some percentile of the probabilities of improvement according to the oracle, $\{\Pbetat(y \geq y_\text{max} \mid \tildext_i)\}_{i = 1}^m$.)
    
    \item \ti{Covariance Matrix Adaptation Evolution Strategy (CMA-ES)} ~\cite{Hansen2006-sa}. A state-of-the-art EDA developed for the special case of multivariate Gaussian search models. We used it to maximize the probability of improvement according to the oracle, $\Pbetat(y \geq y_\text{max} \mid \tildext_i)$.
\end{itemize}

CbAS, DbAS, FB, and CEM-PI all have hyperparameters corresponding to a percentile threshold (for CbAS and DbAS, this is used to construct the relaxed constraint sets). We set this hyperparameter to $90$ for all these methods. For RWR, we set $\gamma = 0.01$, and for CMA-ES, we set the step size hyperparameter to $\sigma = 0.01$.

\begin{figure}
\begin{subfigure}{\textwidth}
  \centering
  \includegraphics[width=\linewidth]{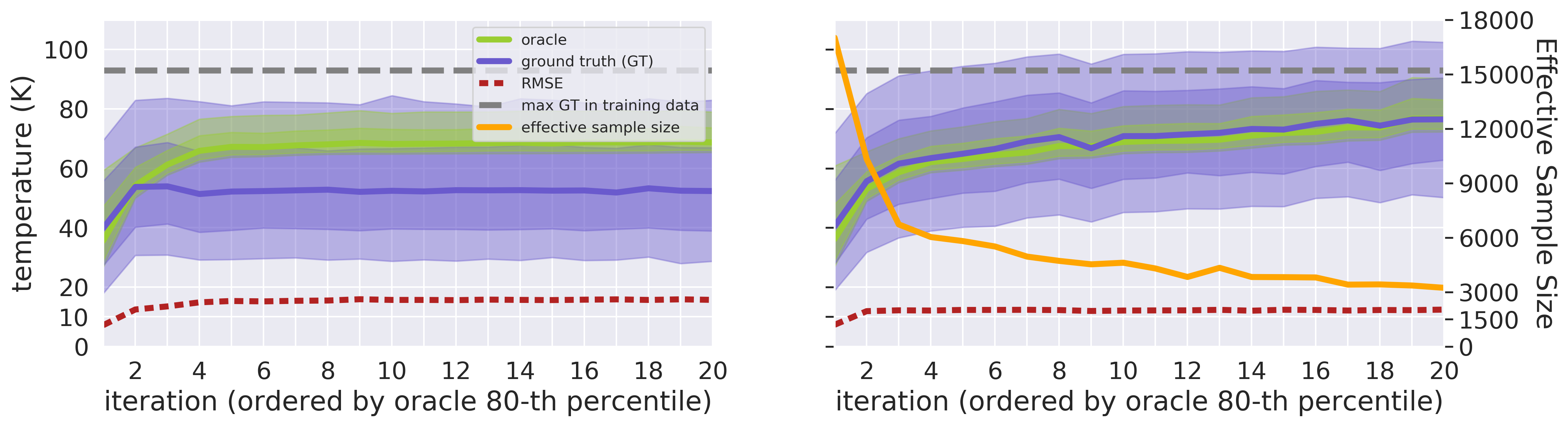}
  \caption{CbAS.}
  \label{fig:cbas}
\end{subfigure}
\begin{subfigure}{\textwidth}
  \centering
  \includegraphics[width=\linewidth]{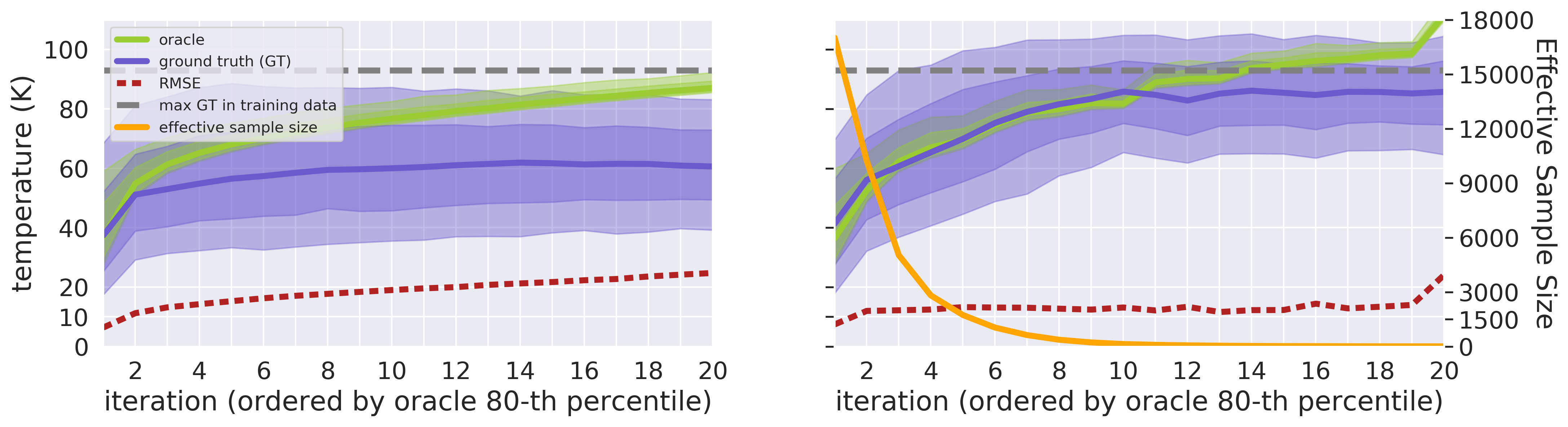}
  \caption{DbAS.}
  \label{fig:dbas}
\end{subfigure}
\begin{subfigure}{\textwidth}
  \centering
  \includegraphics[width=\linewidth]{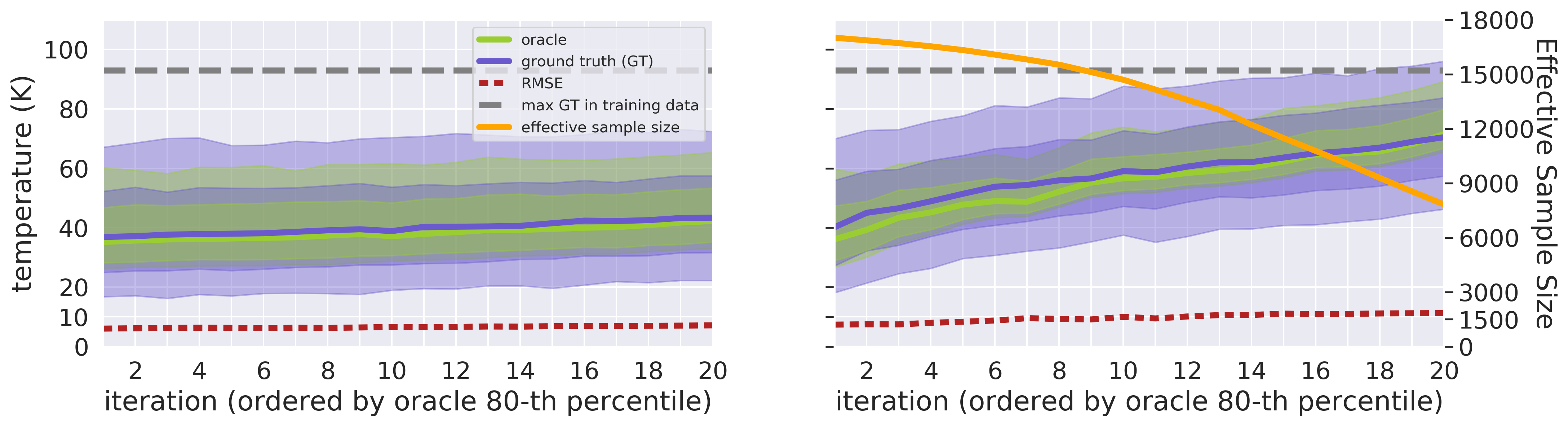}
  \caption{RWR.}
  \label{fig:rwr}
\end{subfigure}
\begin{subfigure}{\textwidth}
  \centering
  \includegraphics[width=\linewidth]{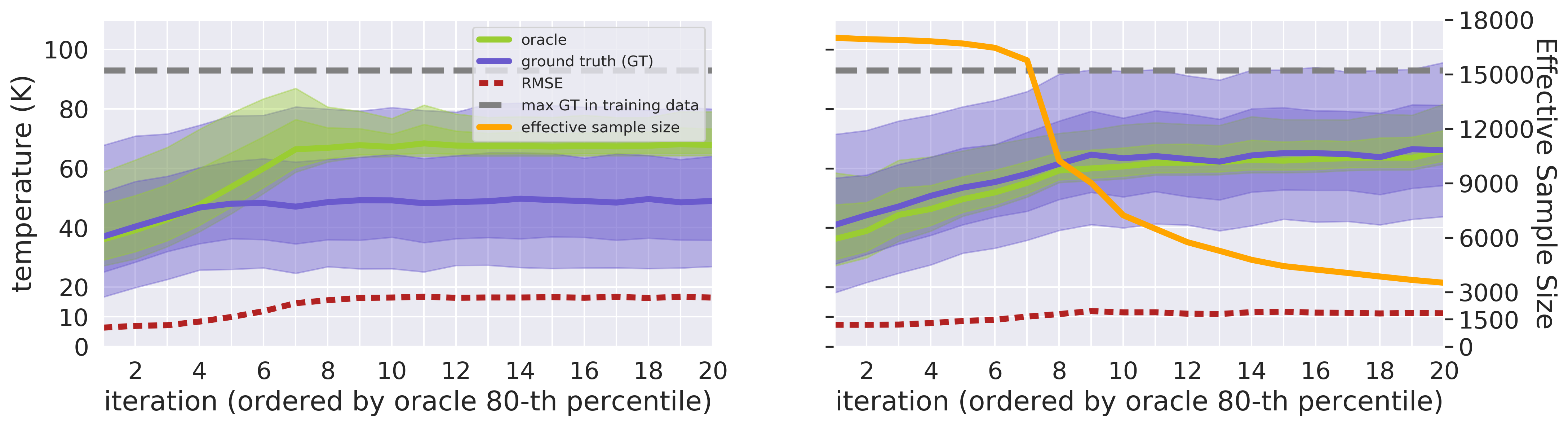}
  \caption{FB.}
  \label{fig:fb}
\end{subfigure}
\caption{Designing superconducting materials. Trajectories of different MBO algorithms run without (left) and with autofocusing (right), on one example trial used to compute Table \ref{tab:stats}. 
At each iteration, we extract the samples with oracle expectations greater than the $\sth{80}$ percentile. For these samples, solid lines give the median oracle (green) and ground-truth (indigo) expectations. The shaded regions capture $70$ and $95$ percent of these quantities. The RMSE at each iteration is between the oracle and ground-truth expectations of all samples. The horizontal axis is sorted by increasing $\sth{80}$ percentile of oracle expectations (\ie, the samples plotted at iteration $1$ are from the iteration whose $\sth{80}$ percentile of oracle expectations was lowest). This ordering exposes the trend of whether the oracle expectations of samples were correlated to their ground-truth expectations. Two more algorithms are shown in Figure \ref{fig:supercon_traj2}.
}
\label{fig:supercon_traj}
\end{figure}

\begin{figure}
\begin{subfigure}{\textwidth}
  \centering
  \includegraphics[width=\linewidth]{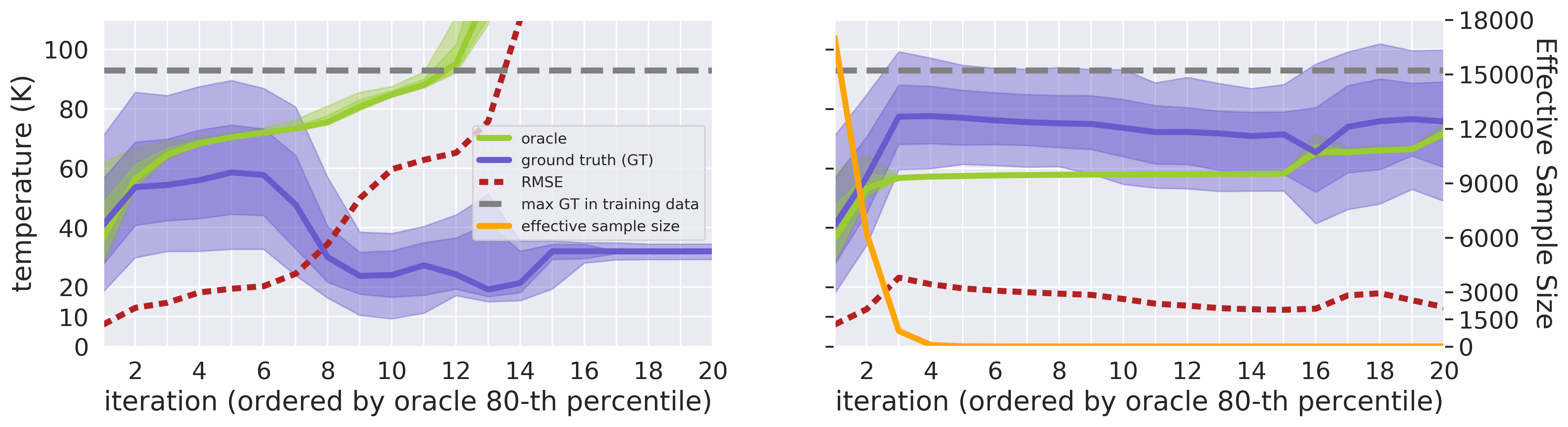}
  \caption{CEM-PI.}
  \label{fig:cempi}
\end{subfigure}
\begin{subfigure}{\textwidth}
  \centering
  \includegraphics[width=\linewidth]{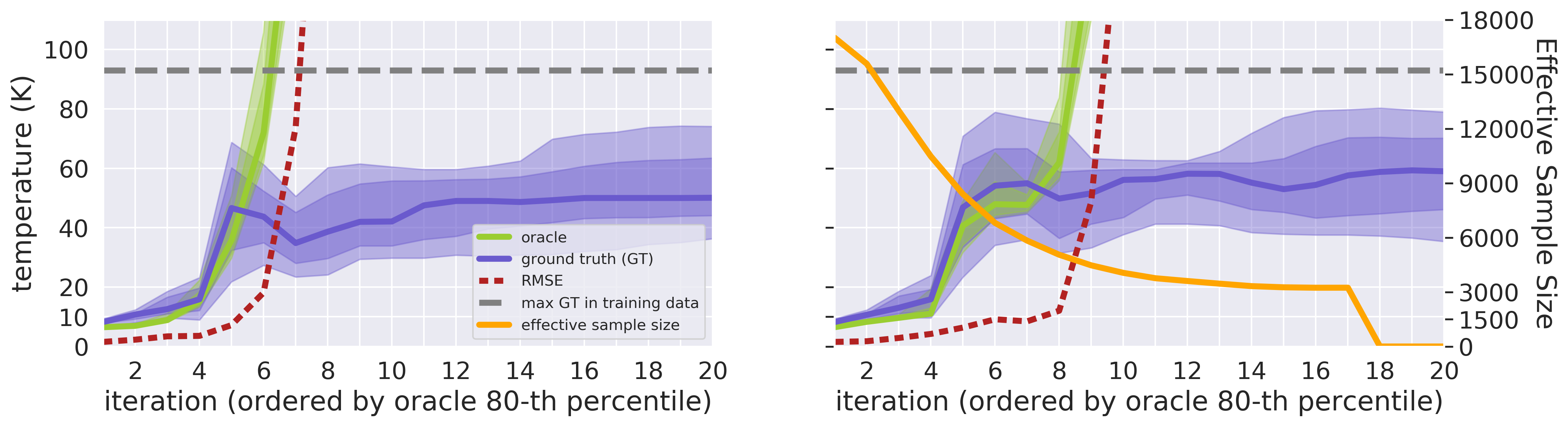}
  \caption{CMA-ES.}
  \label{fig:cmaes}
\end{subfigure}
\caption{Designing superconducting materials. Continuation of Figure \ref{fig:supercon_traj}.
}
\label{fig:supercon_traj2}
\end{figure}

\subsection{Additional experiments}

\paragraph{Importance weight variance control.} To see how much importance weight variance affects autofocusing, we conducted the same experiments as Table \ref{tab:stats}, except without flattening the weights to reduce variance (Table \ref{tab:stats_alpha1}). For CbAS, DbAS, RWR, FB, and CEM-PI, autofocusing without variance control yielded statistically significant improvements to the majority of scores, though with somewhat lesser effect sizes than in Table \ref{tab:stats} when the weights were flattened with $\alpha = 0.2$. For CMA-ES, the only significant improvement autofocusing rendered was to the Spearman correlation between the oracle and the ground-truth expectations. Note that CMA-ES is a state-of-the-art method for optimizing a given objective with a multivariate Gaussian search model \cite{Hansen2006-sa}, which likely led to liberal movement of the search model away from the training distribution and therefore high importance weight variance.

\begin{table}
  \caption{Designing superconducting materials. Same experiments and caption as Table \ref{tab:stats}, except with $\alpha = 1$ (no flattening of the importance weights to control variance). We ran six different MBO methods, each with and without autofocusing. For each method, we extracted those samples with oracle expectations above the $\sth{80}$ percentile and computed their ground-truth expectations. We report the median and maximum of those ground-truth expectations (both in degrees K), their percent chance of improvement (PCI, in percent) over the maximum label in the training data, as well as the Spearman correlation ($\rho$) and root mean squared error (RMSE, in degrees K) between the oracle and ground-truth expectations. Each reported score is averaged over $10$ trials, where, in each trial, a different training set was sampled from the training distribution. ``Mean Diff.'' is the average difference between the score when using autofocusing compared to not. Bold values with one star (*) and two stars (**), respectively, mean $p$-value $< 0.05$ and $< 0.01$ from a two-sided Wilcoxon signed-rank test on the $10$ paired score differences. For all scores but RMSE, a higher value means autofocusing yielded better results (as indicated by the arrow $\uparrow$); for RMSE, the opposite is true (as indicated by the arrow $\downarrow$). 
  }
  \label{tab:stats_alpha1}
  \tiny{
  \begin{tabular*}{\textwidth}{l @{\extracolsep{\fill}} lllll lllll}
    \toprule
    & Median $\uparrow$ & Max $\uparrow$ & PCI $\uparrow$ & $\rho \uparrow$ & RMSE $\downarrow$ & Median $\uparrow$ & Max $\uparrow$ & PCI $\uparrow$ & $\rho \uparrow$ & RMSE $\downarrow$ \\
    & \multicolumn{5}{c}{\textbf{CbAS}} & \multicolumn{5}{c}{\textbf{DbAS}} \\
    \cmidrule(r){2-6} \cmidrule(r){7-11} \\
    
    Original & 51.5 & 103.8 & 0.11 & 0.05 & 17.2 & 57.0 & 98.4 & 0.11 & 0.01  & 29.6  \\
    Autofocused & 73.2 & 116.0 & 2.29 & 0.56 & 12.8 & 69.4 & 109.9 & 0.68 & 0.01 & 27.4 \\
    Mean Diff. & \tb{21.8}** & \tb{12.2}** & \tb{2.18}** & \tb{0.51}** & \tb{-4.4}** & \tb{12.4}** & \tb{11.5}** & \tb{0.58}** & 0.01 & -2.2 \\

    &  \multicolumn{5}{c}{\rule{0pt}{1.5em} \textbf{RWR}} & \multicolumn{5}{c}{\rule{0pt}{1.5em} \textbf{FB}} \\
    \cmidrule(r){2-6} \cmidrule(r){7-11} \\
    Original & 43.4 & 102.0 & 0.05 & 0.92 & 7.4 & 9.2 & 100.6 & 0.14 & 40.09 & 17.5 \\ 
    Autofocused & 68.5 & 113.4 & 1.34 & 0.63 & 14.2 & 63.4 & 110.8 & 0.63 & 0.49 & 11.2 \\
    Mean Diff. & \tb{25.1}** & \tb{11.5}** & \tb{1.30}** & \tb{-0.29}** & \tb{6.8}** & \tb{14.2}** & \tb{10.2}* & \tb{0.50}** & \tb{0.40}** & \tb{-6.3}** \\
    
    & \multicolumn{5}{c}{\rule{0pt}{1.5em}\textbf{CEM-PI}} & \multicolumn{5}{c}{\rule{0pt}{1.5em}\textbf{CMA-ES}} \\
    \cmidrule(r){2-6} \cmidrule(r){7-11} \\
    Original & 34.5 & 55.8 & 0.00 & -0.16 & 148.3 & 42.1 & 69.4 & 0.00 & 0.27 & 27493.2 \\
    Autofocused & 59.5 & 89.1 0.39 & & 0.02 & 46.9 & 45.1 & 70.7 & 0.00 & 0.50 & 27944.9 \\
    Mean Diff. & \tb{25.0}* & \tb{33.3}* & \tb{0.39}* & 0.18 & \tb{-101.4}** & 3.1 & 1.2 & 0 & \tb{0.22}* & 451.7 \\
    
    \bottomrule
  \end{tabular*}
  }
\ifthenelse{\boolean{ARXIV}}{}{\vspace{-0.4cm}}
\end{table}

\paragraph{Oracle capacity.} To see how different oracle capacities affect the improvements gained from autofocusing, we ran the same experiments as Table \ref{tab:stats} with two different oracle architectures. One architecture had higher capacity than the original oracle (hidden layer sizes of $\texttt{(200, 200, 100, 100, 10)}$ compared to $\texttt{(100, 100, 100, 100, 10)}$; Table \ref{tab:stats_big}), and one one had lower capacity (hidden layer sizes of $\texttt{(100, 100, 10)}$; Table \ref{tab:stats_small}).

\begin{table}
  \caption{Designing superconducting materials. Same experiments and caption as Table 1, except using an oracle architecture with hidden layers $\texttt{200} \rightarrow \texttt{200} \rightarrow \texttt{100} \rightarrow \texttt{100} \rightarrow \texttt{10}$. We ran six different MBO methods, each with and without autofocusing. For each method, we extracted those samples with oracle expectations above the $\sth{80}$ percentile and computed their ground-truth expectations. We report the median and maximum of those ground-truth expectations (both in degrees K), their percent chance of improvement (PCI, in percent) over the maximum label in the training data, as well as the Spearman correlation ($\rho$) and root mean squared error (RMSE, in degrees K) between the oracle and ground-truth expectations. Each reported score is averaged over $10$ trials, where, in each trial, a different training set was sampled from the training distribution. ``Mean Diff.'' is the average difference between the score when using autofocusing compared to not. Bold values with one star (*) and two stars (**), respectively, mean $p$-value $< 0.05$ and $< 0.01$ from a two-sided Wilcoxon signed-rank test on the $10$ paired score differences. For all scores but RMSE, a higher value means autofocusing yielded better results (as indicated by the arrow $\uparrow$); for RMSE, the opposite is true (as indicated by the arrow $\downarrow$). 
  }
  \label{tab:stats_big}
  \tiny{
  \begin{tabular*}{\textwidth}{l @{\extracolsep{\fill}} lllll lllll}
    \toprule
    & Median $\uparrow$ & Max $\uparrow$ & PCI $\uparrow$ & $\rho \uparrow$ & RMSE $\downarrow$ & Median $\uparrow$ & Max $\uparrow$ & PCI $\uparrow$ & $\rho \uparrow$ & RMSE $\downarrow$ \\
    & \multicolumn{5}{c}{\textbf{CbAS}} & \multicolumn{5}{c}{\textbf{DbAS}} \\
    \cmidrule(r){2-6} \cmidrule(r){7-11} \\
    
    Original & 48.3 & 100.8 & 0.05 & 0.03 & 19.6 & 55.3 & 98.6 &  0.025 & -0.02  & 32.1  \\
    Autofocused & 79.0 & 119.4 & 4.35 & 0.55 & 13.5 & 81.6 & 113.3 & 5.33 & 0.01 & 27.0 \\
    Mean Diff. & \tb{30.7}** & \tb{18.6}** & \tb{4.30}** & \tb{0.52}** & \tb{-6.1}** & \tb{26.4}** & \tb{14.8}** & \tb{5.30}** & 0.03 & -5.1 \\

    &  \multicolumn{5}{c}{\rule{0pt}{1.5em} \textbf{RWR}} & \multicolumn{5}{c}{\rule{0pt}{1.5em} \textbf{FB}} \\
    \cmidrule(r){2-6} \cmidrule(r){7-11} \\
    Original & 36.5 & 81.3 & 0.00 & -0.24 & 55.5 & 47.8 & 101.5 & 0.09 & 0.06 & 18.3 \\ 
    Autofocused & 73.4 & 114.8 & 2.05 & 0.72 & 12.7 & 63.5 & 113.1 & 0.58 & 0.58 & 10.7 \\
    Mean Diff. & \tb{36.9}** & \tb{33.4}** & \tb{2.05}** & \tb{0.97}** & \tb{-42.8}** & \tb{15.7}** & \tb{11.7}** & \tb{0.49}** & \tb{0.51}** & \tb{-7.5}** \\
    
    & \multicolumn{5}{c}{\rule{0pt}{1.5em}\textbf{CEM-PI}} & \multicolumn{5}{c}{\rule{0pt}{1.5em}\textbf{CMA-ES}} \\
    \cmidrule(r){2-6} \cmidrule(r){7-11} \\
    Original & 48.2 & 58.3 & 0.00 & 0.09 & 271.4 & 39.0 & 63.1 & 0.00 & 0.26 & 6774.6 \\
    Autofocused & 64.5 & 84.1 & 0.48 & -0.14 & 61.07 & 53.1 & 79.0 & 0.01 & 0.48 & 10183.7 \\
    Mean Diff. & 16.3 & \tb{25.9}* & 0.48 & -0.22 & \tb{-210.3} & \tb{14.1}* & \tb{15.9}* & 0.01 & 0.23 & 3409.1 \\
    
    \bottomrule
  \end{tabular*}
  }
\ifthenelse{\boolean{ARXIV}}{}{\vspace{-0.4cm}}
\end{table}

\begin{table}
  \caption{Designing superconducting materials. Same experiments and caption as Table 1, except using an oracle architecture with hidden layers $\texttt{100} \rightarrow \texttt{100} \rightarrow \texttt{10}$. We ran six different MBO methods, each with and without autofocusing. For each method, we extracted those samples with oracle expectations above the $\sth{80}$ percentile and computed their ground-truth expectations. We report the median and maximum of those ground-truth expectations (both in degrees K), their percent chance of improvement (PCI, in percent) over the maximum label in the training data, as well as the Spearman correlation ($\rho$) and root mean squared error (RMSE, in degrees K) between the oracle and ground-truth expectations. Each reported score is averaged over $10$ trials, where, in each trial, a different training set was sampled from the training distribution. ``Mean Diff.'' is the average difference between the score when using autofocusing compared to not. Bold values with one star (*) and two stars (**), respectively, mean $p$-value $< 0.05$ and $< 0.01$ from a two-sided Wilcoxon signed-rank test on the $10$ paired score differences. For all scores but RMSE, a higher value means autofocusing yielded better results (as indicated by the arrow $\uparrow$); for RMSE, the opposite is true (as indicated by the arrow $\downarrow$). 
  }
  
  \label{tab:stats_small}
  \tiny{
  \begin{tabular*}{\textwidth}{l @{\extracolsep{\fill}} lllll lllll}
    \toprule
    & Median $\uparrow$ & Max $\uparrow$ & PCI $\uparrow$ & $\rho \uparrow$ & RMSE $\downarrow$ & Median $\uparrow$ & Max $\uparrow$ & PCI $\uparrow$ & $\rho \uparrow$ & RMSE $\downarrow$ \\
    \cmidrule(r){2-6} \cmidrule(r){7-11} \\
    
    Original & 0.06 & 46.8 & 98.5 & -0.03 & 23.8 & 0.02 & 56.3 & 97.7 & 0.00 & 37.0 \\
    Autofocused & 1.4 & 67.0 & 114.3 & 0.52 & 13.0 & 1.3 & 72.5 & 108.4 & 0.04 & 27.6 \\
    Mean Diff. & \tb{1.3}** & \tb{20.2}** & \tb{15.8}** & \tb{0.55}** & \tb{-10.9}** & \tb{1.3}** & \tb{16.2}** & \tb{10.7}** & 0.03 & \tb{-9.4}** \\

    &  \multicolumn{5}{c}{\rule{0pt}{1.5em} \textbf{RWR}} & \multicolumn{5}{c}{\rule{0pt}{1.5em} \textbf{FB}} \\
    \cmidrule(r){2-6} \cmidrule(r){7-11} \\
    Original & 0.00 & 30.9 & 76.8 & -0.33 & 83.5 & 0.04 & 47.2 & 100.4 & 0.02 & 19.9 \\ 
    Autofocused & 0.68 & 66.0 & 112.6 & 0.57 & 18.3 & 0.43 & 58.2 & 111.4 & 0.50 & 12.3 \\
    Mean Diff. & \tb{0.68}** & \tb{35.1}** & \tb{35.8}** & \tb{0.90}** & \tb{-65.2}** & \tb{0.40}** & \tb{11.0}** & \tb{11.0}** & \tb{0.48}** & \tb{-7.6}** \\
    
    & \multicolumn{5}{c}{\rule{0pt}{1.5em}\textbf{CEM-PI}} & \multicolumn{5}{c}{\rule{0pt}{1.5em}\textbf{CMA-ES}} \\
    \cmidrule(r){2-6} \cmidrule(r){7-11} \\
    Original & 0.00 & 36.3 & 46.2 & -0.01 & 382.4 & 0.00 & 36.9 & 62.3 & 0.10 & 9587.1 \\
    Autofocused & 0.04 & 53.9 & 71.3 & -0.04 & 210.8 & 0.00 & 43.9 & 80.0 & 0.29 & 40858.6 \\
    Mean Diff. & 0.04 & 17.6 & \tb{25.1}* & -0.03 & -171.6 & 0 & \tb{7.0}* & \tb{17.7}* & \tb{0.19}** & \tb{31271.5}** \\
    
    \bottomrule
  \end{tabular*}
  }
\ifthenelse{\boolean{ARXIV}}{}{\vspace{-0.4cm}}
\end{table}

\end{document}